%% file: main.tex
\documentclass[10pt,letterpaper]{article}
\input{format.tex}

\input{macros.tex}

\usepackage{graphicx}
\usepackage{booktabs}
\usepackage{longtable}
\usepackage{placeins}
\usepackage{needspace}
\usepackage{url}
\usepackage[colorlinks=true,linkcolor=red,citecolor=blue,urlcolor=blue]{hyperref}
\usepackage[capitalize,noabbrev]{cleveref}

\usepackage{physics}

\crefname{proposition}{Proposition}{Propositions}
\crefname{theorem}{Theorem}{Theorems}
\crefname{lemma}{Lemma}{Lemmas}
\crefname{corollary}{Corollary}{Corollaries}
\crefname{remark}{Remark}{Remarks}
\crefname{example}{Example}{Examples}
\crefname{definition}{Definition}{Definitions}

\graphicspath{{figs/}}

\title{Rethinking Least-Core Computation\\in Contextual-Distractor Games}

\input{authors.tex}
\date{}

\begin{document}

\maketitle

\input{sections/00_abstract}
\input{sections/01_intro}
\input{sections/02_related}

\input{sections/03_background}
\input{sections/04_method}
\input{sections/05_empirical_validation}
\input{sections/06_distractors}
\input{sections/07_applications}
\input{sections/08_conclusion}

\section*{Acknowledgments}
This research was partially supported by JST PRESTO Grant Number JPMJPR24K4,
JST BOOST Program Grant Number JPMJBY24C6, and JSPS KAKENHI Grant Number
JP23KK0208 and 26K02996.

\bibliography{refs}
\bibliographystyle{plainnat}

\clearpage
\appendix
\input{sections/A_appendix}

\end{document}

%% file: format.tex
\usepackage{times}
\usepackage[letterpaper,textwidth=5.5in,textheight=9in,top=0.75in]{geometry}
\usepackage[authoryear,round]{natbib}
\setcitestyle{citesep={;},aysep={,},yysep={;}}
\usepackage{titlesec}
\titleformat{\section}{\large\bfseries\raggedright}{\thesection}{1em}{}
\titleformat{\subsection}{\normalsize\bfseries\raggedright}{\thesubsection}{1em}{}
\titleformat{\subsubsection}{\normalsize\bfseries\raggedright}{\thesubsubsection}{1em}{}
\titlespacing*{\section}{0pt}{2ex plus .5ex minus .2ex}{1.5ex plus .3ex minus .2ex}
\titlespacing*{\subsection}{0pt}{1.8ex plus .5ex minus .2ex}{.8ex plus .2ex}
\titlespacing*{\subsubsection}{0pt}{1.5ex plus .5ex minus .2ex}{.5ex plus .2ex}
\titleformat{\paragraph}[runin]{\normalsize\bfseries}{}{0pt}{}
\titlespacing*{\paragraph}{0pt}{1.5ex plus .5ex minus .2ex}{1em}
\makeatletter
\renewcommand{\normalsize}{%
  \@setfontsize\normalsize{10pt}{11pt}%
  \abovedisplayskip 7pt plus 2pt minus 5pt
  \belowdisplayskip\abovedisplayskip
  \abovedisplayshortskip 0pt plus 3pt
  \belowdisplayshortskip 4pt plus 3pt minus 3pt}
\makeatother
\normalsize

%% file: macros.tex
\usepackage{amsmath,amssymb,amsthm,bm,aliascnt,xcolor,wrapfig,caption,iftex}

\definecolor{KeraGreen}{RGB}{0,100,0}

\theoremstyle{definition}
\newtheorem{definition}{Definition}[section]
\newaliascnt{proposition}{definition}
\newtheorem{proposition}[proposition]{Proposition}
\aliascntresetthe{proposition}
\newaliascnt{theorem}{definition}

\aliascntresetthe{theorem}
\newaliascnt{lemma}{definition}
\newtheorem{lemma}[lemma]{Lemma}
\aliascntresetthe{lemma}
\newaliascnt{corollary}{definition}
\newtheorem{corollary}[corollary]{Corollary}
\aliascntresetthe{corollary}
\newaliascnt{remark}{definition}

\aliascntresetthe{remark}
\newaliascnt{example}{definition}

\aliascntresetthe{example}
\newaliascnt{fact}{definition}

\aliascntresetthe{fact}
\usepackage{thm-restate}

\DeclareMathOperator*{\argmin}{arg\,min}

\DeclareMathOperator{\Cov}{Cov}

\newcommand{\R}{\mathbb{R}}
\newcommand{\E}{\mathbb{E}}
\newcommand{\Prob}{\mathrm{Pr}}
\newcommand{\Var}{\mathrm{Var}}
\newcommand{\KL}{\mathrm{KL}}

\newcommand{\norm}[1]{\lVert#1\rVert}

\newcommand{\cS}{\mathcal{S}}

\newcommand{\ind}[1]{\bm{1}_{#1}}
\newcommand{\one}{\bm{1}}

\newcommand{\excess}[2]{e_{#1}(#2)}     %
\newcommand{\gw}[2]{u_{#1}^{\eta}(#2)} %
\newcommand{\gibbsdist}{\lambda_{x}^{\eta}}

\newcommand{\lcv}{\varepsilon^{*}}
\newcommand{\slcv}{\varepsilon_{\eta}}
\newcommand{\etahat}{\hat{\varepsilon}}

%% file: authors.tex
\author{%
Hiroshi Kera\thanks{Corresponding author.}\\
\normalsize\normalfont Chiba University; National Institute of Informatics\\
\normalsize\normalfont\texttt{kera@chiba-u.jp}
\\[1ex]
Toshinori Yamauchi\\
\normalsize\normalfont Chiba University, Graduate School of Informatics\\
\normalsize\normalfont\texttt{t.yamauchi@chiba-u.jp}
\\[1ex]
Sai Ganesh Nagarajan\\
\normalsize\normalfont University of Southern Denmark, IMADA\\
\normalsize\normalfont\texttt{sgnagarajan@imada.sdu.dk}
}
\hypersetup{pdfauthor={Hiroshi Kera, Toshinori Yamauchi, Sai Ganesh Nagarajan}}

%% file: sections/00_abstract.tex
\begin{abstract}
Game-theoretic attribution explains a model by assigning credit to its
features or training examples. The least core has attracted interest
as an alternative to Shapley-style averaging because it can expose
players that cause substantial harm in rare, high-value contexts.
However, least-core allocations are generally nonunique, and the
choice of allocation can affect the resulting explanation.
In this study, we investigate how payoff selection and coalition
sampling affect least-core attribution. Our experiments show that
selector choice matters for distinguishing useful and harmful
contributions, and that sampling can degrade harmful-player
identification across the tested selectors even when useful players
remain well identified. These observations motivate efficient
computation with all coalition constraints and a well-defined selector.
We introduce entropic least core (ELC), a smooth approximation whose
unique minimizer follows a continuous path along the temperature to
the nucleolus, a classical refinement of the least core. 
Our experiments show that ELC approximates the nucleolus faster than
an LP-based nucleolus solver while retaining small payoff errors,
with further GPU acceleration at larger problem sizes. In the tested
full-coalition contextual-distractor games, ELC matches the minimum-norm
selector in identification accuracy and more accurately ranks distractors
by harm.
Code and reference results are available online.\footnote{\url{https://github.com/HiroshiKERA/rethinking-least-core}}
\end{abstract}

%% file: sections/01_intro.tex
\section{Introduction}
\label{sec:intro}

A fundamental building block of machine-learning model explanation is scoring the contribution of each feature, training example, or agent to the model's output. Game-theoretic attribution has provided a framework for measuring contributions through a coalitional game: \textit{players} work together as a coalition, and the central task is to allocate the resulting value fairly among the players. In particular, the Shapley value~\citep{shapley1953} is a widely used index. The Shapley value credits each player by its average marginal contribution: it measures the change in coalition value when the player is added, and takes a weighted average over all coalitions not containing that player. Applications include feature attribution, data valuation, and computer-vision explanations~\citep{strumbelj2010,lundberg2017,sundararajan2017,ghorbani2019,covert2020,jethani2022,sumiyasu2024,yamauchi2026,pred-diff}.

The least core~\citep{maschler1979} has recently been studied as a useful game-theoretic index for model explanation~\citep{yan2021,gemp2024,nagarajan2026,harris2026core}. Rather than averaging the marginal contribution, the least core seeks a stable payoff allocation that minimizes the maximum underpayment to coalitions. The critical difference from the Shapley value has recently been formalized by \citet{nagarajan2026} through \textit{contextual distractors}, players that have a minor positive impact in most coalitions but cause substantial harm in rare, high-value coalitions. In their experiments, the least core assigned negative payoffs to them by accounting for the worst case, whereas the Shapley value failed, assigning positive payoffs because averaging lets frequent minor positive contributions outweigh rare but substantial harm. This distinction is relevant to understanding model failures in rare, high-impact contexts.

However, the aforementioned studies provide limited discussion of the choice of \textit{selector} for the least core. Least-core computation is formulated as a linear program (LP) with exponentially many constraints, each corresponding to a coalition. The minimizing payoff vectors are generally not unique, forming a polytope. A selector is a map that chooses one payoff allocation from the polytope, and this choice affects the resulting explanation. However, many works use the minimum-norm selector as a de facto standard~\citep{yan2021,harris2026core,nagarajan2026}, while its suitability for contextual-distractor identification relative to other selectors remains insufficiently examined. Approximate computation methods~\citep{yan2021,gemp2024} do not discuss the final payoff allocation, either; \cite{yan2021} only bounds the optimality gap of minimum value, not the minimizers. \citet{gemp2024} did not specify the selector, and thus the final payoff allocation is implicitly determined by the optimization process and its parameters.

In this study, we investigate least-core computation with a focus on payoff selection. Particularly, we focus on the effect of the selector choice and coalition sampling on the contextual-distractor setting.
Our key observations are twofold: (i) the choice of selector is essential for successful retrieval of contextual distractors, and (ii) coalition sampling particularly degrades their retrieval for standard selectors.
For the former, we found the nucleolus selector to be the most successful, but expensive to compute.
The latter is particularly important because existing acceleration methods~\citep{yan2021,gemp2024} rely on coalition sampling.
These observations motivate a more tractable approximation of the nucleolus with full constraints. We introduce the \textit{entropic least core} (ELC), a smoothed approximation of the least core that replaces the maximum excess with a log-sum-exp objective. At temperature $\eta>0$, the ELC objective is convex and differentiable, with a unique minimizer, and can be optimized by projected gradient descent with GPU acceleration.

Our contributions are as follows:
\begin{itemize}
  \item We revisit least-core computation from the payoff allocation perspective and provide empirical observations to inform its design. We show the substantial impact of selector choice and the potential unreliability of coalition sampling for retrieving contextual distractors.
  \item We propose the $\eta$-entropic least core ($\eta$-ELC) as a smoothed approximation of the least core. Its unique payoff varies continuously with temperature and converges to the nucleolus as $\eta\to0$. Our experiments show that ELC approximates the nucleolus with small payoff errors in less solver time than the successive-LP nucleolus solver.
  \item Our experiments compare Shapley attribution and least-core methods with different selectors. (i) In the tested games, all methods identify useful players perfectly with full coalitions. Under sampling, useful-player identification remains comparatively strong, although Shapley performance declines. (ii) With full coalitions, the minimum-norm selector and ELC both attain perfect average precision for contextual-distractor identification, but ELC more accurately ranks distractors by harm. ELC thus provides more fine-grained attribution in these games.
\end{itemize}

%% file: sections/02_related.tex
\section{Related work}
\label{sec:related}

The Shapley value~\citep{shapley1953} is a widely used game-theoretic
measure of contribution in model explanation~\citep{strumbelj2010,lundberg2017},\footnote{Further Shapley-related work is discussed in \cref{app:additional-related}.}
whereas the least core~\citep{maschler1979} has received comparatively
limited attention.
\citet{yan2021} introduced the core and least core for model explainability.
Toward computational tractability, they showed that solving an LP with sampled coalition constraints may attain \textit{probably approximate} least core, i.e., good approximation of the minimal maximal excess in a high probability.
\citet{gemp2024} subsequently developed LP-free methods based on cyclic
projections and a Core Lagrangian formulation with stochastic updates.
Both studies pursued more efficient least-core computation and compared
its attributions with Shapley values in machine-learning applications.
\citet{nagarajan2026} identified a concrete source of advantage for the
least core through \emph{contextual distractors}: groups of features that
help in less informative contexts but strongly harm important ones.
They established conditions under which Shapley and other semivalues
assign more credit to these groups than the least core, and demonstrated
this effect in explaining biased image classifiers.

Our study addresses computational aspects of the least-core in this contextual-distractor setting---is the minimum-norm selector, the de facto standard selector, the most useful for retrieving contextual distractors, and how much does the sampling strategy of~\citet{yan2021} affect retrieval?
As \citet{gemp2024,harris2026core} focus on nonnegative payoffs and cannot assign negative payoffs to contextual distractors, their methods are outside the scope of this study.

%% file: sections/03_background.tex
\section{Preliminaries}
\label{sec:background}

Let $G=(N,v)$ be a coalitional game with player set
$N=\{1,\ldots,n\}$ and $v:2^N\to\R$, $v(\emptyset)=0$.
An attribution is represented by a vector $x\in\R^n$, where $x_i$ is the
credit assigned to player $i$.
We call such vectors \emph{payoff vectors}.
Write $x(S):=\sum_{i\in S}x_i$ for the total credit of a coalition $S$,
and $\excess{S}{x}:=v(S)-x(S)$ for its \emph{excess} (underpayment)
relative to the worth $v(S)$.
A payoff vector with $x(N)=v(N)$ is called \emph{efficient}.

\subsection{Shapley value and least core}

We compare two payoff rules for the same coalitional game: the Shapley
value averages marginal contributions, whereas the least core controls
the worst coalition-level underpayment.

\begin{definition}[Shapley value]
The Shapley value is the payoff vector $x$ with
\begin{equation}
  x_i
  =
  \sum_{S\subseteq N\setminus\{i\}}
  \frac{|S|!(n-|S|-1)!}{n!}
  \qty(v(S\cup\{i\})-v(S)).
\end{equation}
\end{definition}
In words, it averages the marginal contribution of player $i$ over all
contexts $S \subseteq N \setminus \{i\}$.
In contrast, the least core focuses on the worst excess across
coalitions.
Throughout the paper, least-core constraints range over the proper
coalition family $\cS=2^N\setminus\{\emptyset,N\}$, and we write
$M=|\cS|=2^n-2$.
For efficient payoffs, both the empty and grand coalitions have zero
excess. Excluding both allows the least-core value to be negative.

\begin{definition}[$\epsilon$-core]
For $\epsilon \in \R$, the $\epsilon$-core is the set
$\operatorname{Core}(G,\epsilon) \subset \R^n$ of payoffs that satisfy
the following constraint:
\begin{equation}
  x(S)\ge v(S)-\epsilon\ \text{ for all } S\in\cS,
  \qquad x(N)=v(N).
  \label{eq:core-constraint}
\end{equation}
The smallest $\epsilon$ with $\operatorname{Core}(G,\epsilon)\neq\emptyset$
is the \emph{least-core value} $\lcv$.
The set $\mathrm{LC}(\lcv):=\operatorname{Core}(G,\lcv)$ is the
\emph{least core}.
\end{definition}

The least core is a set of payoff vectors, while a least-core payoff is
one point in that set.
The least-core value and one least-core payoff are computed by the
linear program
\begin{equation}\label{eq:lclp}
  \min_{x\in\R^n,\,\epsilon\in\R}\ \epsilon \quad \text{s.t.} \quad x(N)=v(N)\ \text{ and }\ x(S)\ge v(S)-\epsilon,\ \forall S\in\cS.
\end{equation}
Equivalently, it minimizes the maximum excess
$\max_{S\in\cS}\excess{S}{x}$ over efficient payoffs.
This worst-coalition criterion distinguishes the least core from the
Shapley value.

\subsection{Selectors for the least core}
\label{sec:bg-selector}
The least-core value $\lcv$ is unique, but the minimizing payoff vector need not be unique; $x \in \mathrm{LC}(\lcv)$ is a least-core payoff. Without any specification, the returned payoff depends on the solver and the parameters.
A \emph{selector} is a rule that returns one least-core payoff from
the least-core face.
There are several natural selectors~\citep{nagarajan2026}.
We describe two of them here.

\paragraph{Minimum-norm selector.}
The minimum-norm selector breaks ties by the $\ell_2$ norm,
\begin{equation}
  x_{\mathrm{mn}}
  := \sigma_{\mathrm{mn}}(\mathrm{LC}(\lcv))
  := \argmin_{x\in\mathrm{LC}(\lcv)}\ \norm{x}_2^2.
\end{equation}
The objective is strictly convex, so the minimizer is unique.
Intuitively, among least-core payoffs with the same stability guarantee,
this selector favors the point closest to the origin.

\paragraph{Nucleolus selector.}
The nucleolus\footnote{Strictly, we consider the \emph{prenucleolus}:
we impose efficiency without individual rationality
($x_i\ge v(\{i\})$), allowing negative credit.
This follows \citet{yan2021}.}~\citep{schmeidler1969,maschler1979} refines
the least core by minimizing the second-largest excess, then the third,
and so on, without worsening the preceding minima.
Fix an enumeration $S_1,\ldots,S_M$ of $\mathcal S$.
For each candidate payoff $x$, let $(\sigma_1(x),\ldots,\sigma_M(x))$
be a permutation of $\{1,\ldots,M\}$ that sorts the excesses:
\[
  e_{S_{\sigma_1(x)}}(x)\ge e_{S_{\sigma_2(x)}}(x)\ge\cdots\ge e_{S_{\sigma_M(x)}}(x).
\]
The nucleolus lexicographically minimizes this vector over efficient payoffs (cf. \cref{app:nucleolus}).

The min-norm selector is intuitive and popular in literature~\citep{yan2021,harris2026core,nagarajan2026}.
The nucleolus is more aligned with the least-core idea as it optimizes the excess profile beyond the worst excess, while \citet{yan2021} showed that their sampling approach requires exponential queries to give a good approximation of the nucleolus.
In \citet{gemp2024}, the returned payoff vector is determined
implicitly by the iterative procedure and can depend on hyperparameters,
initialization, sampling, and stopping. It is worth noting that their method restricts the payoff to the simplex, and thus cannot assign negative payoffs to contextual distractors.

\subsection{Contextual distractors and a motivating example}
\label{sec:distractors}
\label{sec:small-freerider}

We now introduce the contextual distractors and a motivating example that highlights the effect of the selector choice on the attribution.

\begin{definition}[Contextual distractor~\citep{nagarajan2026}]
\label{def:contextual-distractor}
Let $D\subset N$ be a nonempty group. For a context $T\subseteq N\setminus D$, write
$\Delta_D(T)=v(T\cup D)-v(T)$.
For $\gamma>0$, $D$ is a $\gamma$-contextual distractor if
$\Delta_D(T)\le-\gamma$ for some such $T$.
\end{definition}
\citet{nagarajan2026} discussed when the least core and the Shapley value assign different attribution scores. A critical condition is that $\Delta_D(T) \le -\gamma$ for $v(T) \ge \tau$ for some $\tau > 0$ and $\Delta_D(T) > 0$ otherwise. Namely, the contextual distractor $D$ is harmful only when $T$ is a (rare), high-value context, and otherwise harmless.

Throughout this paper, we focus on this case. The following game illustrates the contextual distractors $D$ for useful players $U$ and the importance of the selector choice.

\paragraph{A simple contextual-distractor game.}
Let $U$ contain useful players with additive contributions $m_i$, and let $D$ contain distractors:
\begin{equation}
 v(S)=\sum_{i\in S\cap U}m_i+a|S\cap D|
       +b\,\mathbf1[U\subseteq S]\,\mathbf1[S\cap D=\emptyset],
 \label{eq:simple-freerider}
\end{equation}
where $\mathbf1[A]$ equals $1$ if condition $A$ holds and $0$ otherwise, and $0< a|D| < b$. Each distractor contributes $a>0$ in ordinary coalitions, but destroys the bonus $b$ when all useful players are present without any distractor. Formally, $D$ is a contextual distractor with $\Delta_D(U) \le a|D| - b$.

Consider four useful players with $m=(6,8,12,14)$,
two distractors, $a=5$, and $b=60$. \Cref{tab:freerider} compares the payoff of the least core with different selectors. \textit{LP} denotes the vertex payoff returned by the LP solver, which is determined by the solver and the parameters. The Shapley value and our method ELC are also given for reference. As can be seen, the Shapley value fails to identify the contextual distractors, assigning positive credit to them, but the least core methods correctly assign negative credit to at least one of them. Particularly, the LP with the min-norm and nucleolus selectors successfully assign negative credit to both distractors. The latter is more accurate in the positive credit assignment to the useful players; it retrieves $m_i$ with an offset of $7.5$ to each useful player. This common offset is consistent with the efficiency constraint $x(N)=v(N)$ and does not affect useful-player ordering.
\begin{table}[t]
  \caption{Attribution in a six-player contextual-distractor game.
  Shapley assigns positive credit to both distractors. Least-core
  allocations assign negative total credit to the pair, but differ in
  individual signs and the relative credit of useful players.}
  \label{tab:freerider}
  \centering
  \small\setlength{\tabcolsep}{4pt}
  \begin{tabular}{lrrrrr}
  \toprule
  Player & Shapley & LP & LP + min-norm & LP + nucleolus & ELC \\
  \midrule
  useful 1 ($m_1=6$)  & $7$ & $6$ & $17.5$ & $13.5$ & $13.5$ \\
  useful 2 ($m_2=8$)  & $9$ & $8$ & $17.5$ & $15.5$ & $15.5$ \\
  useful 3 ($m_3=12$) & $13$ & $12$ & $17.5$ & $19.5$ & $19.5$ \\
  useful 4 ($m_4=14$) & $15$ & $44$ & $17.5$ & $21.5$ & $21.5$ \\
  distractor 1 & $3$ & $5$ & $-10$ & $-10$ & $-10$ \\
  distractor 2 & $3$ & $-25$ & $-10$ & $-10$ & $-10$ \\
  \midrule
  Max.\ excess & $56$ & $\mathbf{30}$ & $\mathbf{30}$ & $\mathbf{30}$ & $\mathbf{30}$ \\
  \bottomrule
  \end{tabular}
  \end{table}

%% file: sections/04_method.tex
\section{Entropic least core}
\label{sec:method}

\subsection{Definition and properties}

We introduce entropic least core (ELC), which is a \textit{smoothed}
least-core payoff and approaches the least-core face in the limit of
small temperature.

The least-core value minimizes the maximum excess over efficient payoffs:
\begin{equation}
  \lcv = \min_{x:\, x(N)=v(N)}\ \etahat(x),\quad \text{ where }\quad  \etahat(x) := \max_{S\in\cS}\ \excess{S}{x}.
\end{equation}
ELC replaces this hard maximum by the log-sum-exp approximation
\begin{equation}\label{eq:smoothed-excess}
  \slcv(x) = \eta \log \sum_{S\in\cS} \gw{S}{x},\quad \text{ where }\quad \gw{S}{x} := \exp\bigl(\excess{S}{x}/\eta\bigr),
\end{equation}
where $\eta>0$ is the temperature parameter.
Letting $M := |\cS|$, the log-sum-exp bound gives
\begin{equation}
  \etahat(x) \le \slcv(x) \le \etahat(x) + \eta\log M.
\end{equation}
Thus $\slcv(x) \to \etahat(x)$ as $\eta\to 0^{+}$.

\begin{definition}[Entropic least core]
  The $\eta$-entropic least core ($\eta$-ELC) is the payoff vector that
  minimizes the smoothed maximum excess over efficient payoff vectors:
  \begin{equation}\label{eq:elc}
    x_\eta
    := \argmin_{x:\, x(N)=v(N)}\ \slcv(x).
  \end{equation}
\end{definition}

\paragraph{Entropic view.} The replacement of the hard max by the log-sum-exp function has a natural interpretation of introducing an entropy regularization term. Let $\Delta$ be the simplex over the coalition family $\cS$, and let $w=(w_S)_{S\in\cS}\in\Delta$ be a reference distribution with full support. The variational representation of the log-sum-exp function suggests that for any $a\in\R^{M}$ and $\eta>0$,
\begin{equation}
  \max_{\lambda\in\Delta}
  \Bigl\{
    \sum_{S\in\cS} \lambda_S a_S
    - \eta\,\KL(\lambda \| w)
  \Bigr\}
  =
  \eta \log \sum_{S\in\cS} w_S\, e^{a_S/\eta}.
\end{equation}
The left-hand side is a maximization over the simplex with regularization by the Kullback--Leibler divergence, called entropic regularization in the literature of optimal transport~\citep{cuturi2013sinkhorn}.
The maximizer is unique and is the Gibbs distribution
$\lambda_S\propto w_S e^{a_S/\eta}$.
Setting $a_S=\excess{S}{x}$ and $w_S=1/M$ for all $S\in\cS$ gives the ELC objective
up to the additive constant $-\eta\log M$, giving the same
minimizer.

\paragraph{Why not regularize the least core directly?}
A natural alternative is to compute the least-core polytope and then select a
canonical point using a strongly convex or entropic regularizer. This does not
remove the main computational bottleneck: the least core contains one
constraint per coalition,
\[
    \excess{S}{x}\le \varepsilon
    \qquad
    \forall S\in\cS,
\]
and separation at a candidate $(x,\varepsilon)$ requires solving $\max_{S\in\cS}\excess{S}{x}$. Efficient optimization or projection over such an exponentially described
polytope therefore relies on an efficient separation oracle
\citep[pp.~102--132]{grotschellovaszschrijver1988}, while core and
$\varepsilon$-core related problems are intractable for general compactly
represented cooperative games without additional structure
\citep{deng1994complexity}. Thus, regularizing the allocation changes how one
selects among feasible least-core solutions, but leaves the hard coalition
search intact. Entropic smoothing instead trades this combinatorial search
for stochastic estimation, replacing the exact worst-coalition problem by
sampling the influence of high-excess coalitions.

The following proposition characterizes the $\eta$-ELC and its limit,
using the exponential-penalty result of \citet{alvarez2000}
to establish convergence to the nucleolus.
\begin{restatable}[ELC properties]{proposition}{elcproperties}
\label{thm:elc-properties}
For $n\ge2$ and $\cS=2^N\setminus\{\emptyset,N\}$, the following hold.
 \begin{enumerate}
  \item For every $\eta>0$, the $\eta$-ELC is a singleton; namely,
        \cref{eq:elc} has a unique minimizer.
  \item $0\le\etahat(x_\eta)-\lcv\le\eta\log M$.
        As $\eta\to0^+$, $\slcv(x_\eta)\to\lcv$.
  \item The exact ELC path converges to the nucleolus
        \(x_{\mathrm{nuc}}\); that is,
        \(x_\eta\to x_{\mathrm{nuc}}\) as \(\eta\to0^+\).
 \end{enumerate}
\end{restatable}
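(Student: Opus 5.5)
We prove the three items in order, working on the affine set $A:=\{x\in\R^n: x(N)=v(N)\}$ and writing $x=x_0+y$ with $y$ in the subspace $V:=\{y:\one^\top y=0\}$.

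\emph{Item 1: existence and uniqueness.} We first show that $\slcv$ is strictly convex on $A$. Each excess $\excess{S}{x}=v(S)-\ind{S}^\top x$ is affine, and log-sum-exp is convex, so $\slcv$ is convex. Moreover, the Hessian of log-sum-exp composed with an affine map is $\eta^{-1}C^\top\bigl(\operatorname{diag}(\lambda)-\lambda\lambda^\top\bigr)C$. Here $C$ has rows $\ind{S}^\top$ for $S\in\cS$, and $\lambda=\gibbsdist$ is the Gibbs distribution, which has full support. For a direction $y\in V$, the quadratic form equals $\eta^{-1}$ times the variance of $(y(S))_{S\in\cS}$ under $\lambda$. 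Since $\lambda$ has full support, this vanishes only if $y(S)$ is constant over $\cS$. For $n\ge2$, the singletons give $y_i=c$ for all $i$, and $y(N)=0$ forces $c=0$. (For $n=2$ we use $\{1\},\{2\}$; for larger $n$ we may also compare with doubletons to get $c=2c$.) Hence $\slcv$ is strictly convex on $A$.

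We then show coercivity on $A$. For $y\in V\setminus\{0\}$, some $i$ has $y_i<0$, so the excess $e_{\{i\}}(x_0+ty)\to\infty$ as $t\to\infty$. Because $\slcv\ge\etahat$, we get $\slcv\to\infty$ along every ray in $A$. By compactness of the unit sphere of $V$ and continuity, the growth is uniform, so a minimizer exists. Strict convexity then gives uniqueness.

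\emph{Item 2: value bounds.} Let $x^*$ be any least-core payoff. Using the sandwich $\etahat\le\slcv\le\etahat+\eta\log M$ and the optimality of $x_\eta$,
\[
\lcv\le\etahat(x_\eta)\le\slcv(x_\eta)\le\slcv(x^*)\le\lcv+\eta\log M .
\]
This chain yields both claims of item 2 at once.

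\emph{Item 3: convergence to the nucleolus.} This is the main step, and we invoke the exponential-penalty theorem of \citet{alvarez2000}. That result concerns $\min_x\sum_k\exp(f_k(x)/\eta)$ for affine $f_k$ over a polyhedron. It states that the minimizers converge, as $\eta\to0^+$, to the lexicographic (``nucleolus-type'') minimax solution, defined as the lexicographic minimizer of the decreasingly sorted vector $(f_k(x))_k$. Our objective is a monotone transform of $\sum_S\gw{S}{x}$, so $x_\eta$ is exactly the exponential-penalty minimizer with $f_S=\excess{S}{\cdot}$ on $A$. Their limit is precisely the definition of the (pre)nucleolus in \cref{app:nucleolus}.

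The obstacle is verifying the hypotheses of that theorem, namely boundedness of the optimal set and uniqueness of the lexicographic limit. Boundedness follows from the uniform coercivity above: item 2 gives $\etahat(x_\eta)\le\lcv+\eta\log M$, and $\etahat$ has bounded sublevel sets on $A$. Uniqueness of the nucleolus is classical and holds because the $\ind{S}$ for $S\in\cS$ span directions separating $V$.

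As a fallback, we would argue directly. Any cluster point $\bar x$ lies in $\mathrm{LC}(\lcv)$ by item 2. Then, by induction over the successive LP stages, we compare $\slcv(x_\eta)\le\slcv(x_{\mathrm{nuc}})$. In this comparison the contributions of the tight coalitions at each level dominate at the scale $e^{\varepsilon_k/\eta}$. This shows $\bar x$ attains each successive minimal excess level, hence $\bar x=x_{\mathrm{nuc}}$.
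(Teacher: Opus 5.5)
Your items 1 and 2 are correct and follow the paper's argument. The variance form of the Hessian restricted to singletons is the paper's strict-convexity lemma. Coercivity via singletons plus $\one^\top y=0$ is a harmless variant of the paper's two-sided coordinate bounds from $\{i\}$ and $N\setminus\{i\}$, and it also supplies the least-core payoff $x^*$ you use in item 2.

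The gap is in item 3: you misstate the theorem you invoke. The affine case of \citet{alvarez2000} is not about minimizing $\sum_k\exp(f_k/\eta)$ with limit the lexicographic minimax over the whole domain. It concerns a constrained program $\min f_0(z)$ subject to $f_i(z)\le 0$ on $\R^d$ whose solution set $Z^*$ is nonempty and compact. It uses the penalty $f_0+\eta\sum_i\exp(f_i/\eta)$, and its limit lexicographically minimizes the sorted constraint values over $Z^*$. Your sentence ``$x_\eta$ is exactly the exponential-penalty minimizer with $f_S=\excess{S}{\cdot}$'' is precisely the step that must be proved. Read literally ($f_0\equiv 0$, $f_S=e_S$), it fails: the underlying program is core feasibility, $\{x: x(N)=v(N),\ e_S(x)\le 0\ \forall S\}$. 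That set is empty whenever $\lcv>0$, and the game in \cref{tab:freerider} has $\lcv=30$. Relatedly, the hypothesis you check, boundedness of the path $\{x_\eta\}$, is not the theorem's hypothesis. What is needed is compactness of $Z^*$ for the correct program.

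The missing idea is the paper's epigraph reformulation. First eliminate efficiency by writing $x=\bar x+Qy$; penalizing $x(N)=v(N)$ as two inequalities would change the objective. Then set $f_0=t$ and $f_S=e_S(x)-t$. The penalty $t+\eta\sum_S\exp((e_S(x)-t)/\eta)$ is minimized in $t$ at $t=\slcv(x)$, with value $\slcv(x)+\eta$, so the penalized minimizer is $(x_\eta,\slcv(x_\eta))$. Now $Z^*=\mathrm{LC}(\lcv)\times\{\lcv\}$, which is compact by your coercivity. On $Z^*$ the constraint values are $e_S(x)-\lcv$, and their lexicographic minimizer is the nucleolus. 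With the theorem as quoted in the paper, an equivalent shortcut is $f_0\equiv 0$ and $f_S=e_S-\lcv$. The factor $e^{-\lcv/\eta}$ leaves the minimizer unchanged and makes $Z^*$ the least core.

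Your fallback does not close the gap. The comparison $\slcv(x_\eta)\le\slcv(x_{\mathrm{nuc}})$ pins down the first level only to within $O(\eta)$. The first-level terms $\exp((e_S(x_\eta)-\lcv)/\eta)$ can each differ from $1$ by $O(1)$, so the comparison alone cannot isolate the second level at scale $e^{-(\lcv-\theta_2)/\eta}$. A direct proof needs multiplier asymptotics: the first-order condition $\sum_S\lambda^{\eta}_{x_\eta}(S)\,\ind{S}\in\R\one$, together with limits of normalized Gibbs weights on each tight set. That is exactly what the exponential-penalty theory supplies.
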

The proof is in \cref{app:zero-temperature-selector}.
The first two statements say that ELC is a principled smooth
approximation to the least core: every positive temperature gives a unique payoff
whose worst excess is within $\eta\log M$ of the least-core value.
The third statement identifies the induced selector: it converges to the
nucleolus, the classical lexicographic refinement of the least core.

\paragraph{Selector view.} From this perspective, the KL-regularized
objective is not only a smooth surrogate for the least-core value; it equips the approximate least core with a selector, inducing the unique payoff for each $\eta > 0$. 
As the approximation vanishes with $\eta \to 0^+$, this payoff smoothly converges to the nucleolus. Quadratic regularization instead selects the minimum-norm allocation in the vanishing-regularization limit~\citep{li2026nonlinear}.

\subsection{Computation}
\label{sec:elc-computation}
We compute ELC by projected gradient descent on the convex payoff
objective over the efficiency hyperplane. Projection onto the efficiency hyperplane is
closed form and requires no linear optimization oracle.
Differentiating \cref{eq:smoothed-excess} gives
\begin{equation}
  \frac{\partial \slcv(x)}{\partial x_i}
  = -
  \frac{\sum_{S\in\cS,\, i\in S} \gw{S}{x}}{\sum_{S\in\cS} \gw{S}{x}}
  = -\Prob_{S\sim\gibbsdist}[\, i\in S \,],
  \label{eq:grad}
\end{equation}
where $\gibbsdist$ is the Gibbs distribution on $\cS$,
\begin{equation}
  \gibbsdist(S)
  := \frac{\gw{S}{x}}{\sum_{S'\in\cS} \gw{S'}{x}}.
  \label{eq:gibbs-dist}
\end{equation}
\Cref{eq:grad} shows that the gradient of the ELC objective is the negated membership probability under the Gibbs distribution $\gibbsdist$, which puts more mass on coalitions with larger excess.

Let $r(x)\in\R^n$ with $r_i(x)=-\Prob_{S\sim\gibbsdist}[i\in S]$.
The projected gradient update is
\begin{equation}
  x \leftarrow x - \alpha \bigl( r(x) - \bar r(x)\,\one \bigr),
  \label{eq:grad-update}
\end{equation}
where $\alpha>0$ is the step size, $\bar r(x):=\frac{1}{n}\sum_{i=1}^n r_i(x)$, and $\one$ is the all-ones vector. Subtracting $\bar r(x)\one$ keeps
$x(N):=\sum_{i=1}^n x_i$ unchanged over iterations. If we initialize, for example, at the uniform efficient payoff $x_0=(v(N)/n)\one$, every iterate remains efficient. Standard smooth-convex guarantees give objective-error rates
$O(1/T)$ after $T$ iterations with step size $4\eta/n$ and $O(1/T^2)$ with Nesterov
acceleration (\cref{prop:pgd-rate}). 
In practice, we approximate the ELC path by annealing, computing $\eta$-ELC at
successively smaller values of~$\eta$. 

%% file: sections/05_empirical_validation.tex
\input{tables/scaling_values}
\section{Computation and payoff selection}
\label{sec:selector}
\label{sec:experiments}
\label{sec:quality-check}
\label{sec:which-point}
\label{sec:computational-efficiency}

This section presents the empirical validation of ELC. Throughout the paper, unless otherwise stated, all the experiments were run on a Linux server with an AMD EPYC 9124 16-core CPU and an NVIDIA RTX PRO 6000 Blackwell Max-Q with the following settings.

\paragraph{Experimental setups.}
\label{par:experimental-setup}
\textbf{(ELC)} Our full-coalition experiments enumerate all $M=2^n-2$ nonempty proper
coalitions and use uniform ELC weights $w_S=1/M$.
We optimize ELC by projected gradient descent with Nesterov acceleration,
step size $\alpha=4\eta/n$, and initial payoff $x_0=(v(N)/n)\one$.
We use nine annealing stages from $\eta=1$ to $0.002$, warm-starting
each stage from the preceding payoff.  Each stage stops at a projected-gradient
infinity norm of $10^{-8}$ or after $10{,}000$ updates.
The full temperature schedule and solver details are in
\cref{app:small-game-protocol}.
\textbf{(Baselines)}
We compare ELC with LP, LP + min-norm, and LP + nucleolus, using
HiGHS for linear programs and SLSQP for minimum-norm selection.
The nucleolus solver follows the successive-LP procedure in
\cref{app:nucleolus}.

\begin{wraptable}{r}{0.40\textwidth}
    \caption{Least-core quality.
    Entries are mean $\pm$ standard error over five games.
    Rel.\ gap uses the coalition-value range.}
    \label{tab:quality-check}
    \centering
    \footnotesize
    \setlength{\tabcolsep}{2pt}
    \input{tables/quality_recovery}
\end{wraptable}
\paragraph{Least core value precision.}
Defining random games with $v(S)=|S|/n+\xi_S$, independent
$\xi_S\sim\mathcal N(0,0.3^2)$, and $v(N)=1$, ELC attains
mean hard-excess gaps $\etahat(x)-\lcv$ of order $10^{-3}$ against the full LP
(\cref{tab:quality-check}). 

\paragraph{ELC payoff in the limit.} We introduce UCI games from~\citet{yan2021} in addition to random games, and observe in \Cref{tab:selector-proximity} that the ELC payoff approaches the nucleolus in the limit of $\eta\to 0$.
As temperature decreases, the median distance to the nucleolus falls
monotonically in all three settings (\cref{fig:selector-path}); individual
paths need not be monotone.

\WFclear

\begin{table}[t]
\centering
\caption{ELC approximates the nucleolus at lower computational cost.
Distances are $d_{\rm nuc}=\|x_{\rm ELC}-x_{\rm nuc}\|_2$ and
$d_{\rm mn}=\|x_{\rm ELC}-x_{\rm mn}\|_2$; gap is
$\etahat(x_{\rm ELC})-\lcv$. Distances and gaps use medians across
20 games per random setting or ten UCI splits, after taking within-game
medians over three repetitions. Times average optimization runtimes,
excluding game construction and independent evaluation.
Further details are in \cref{app:speed-claims}.
$\dagger$ One failed min-norm reference is excluded.
$\ddagger$ House nucleolus time averages 15 completed repetitions,
excluding 15 timeouts under a 60-minute limit; distances use all ten splits.}
\label{tab:selector-proximity}
\label{tab:nucleolus-speed}
\small\setlength{\tabcolsep}{4pt}
\input{tables/selector_combined_main}
\end{table}

\paragraph{Runtime.}
\Cref{tab:selector-proximity} compares finite-annealing ELC with the
successive-LP nucleolus solver at their attained accuracies.
ELC takes less solver time in the reported settings.
\Cref{tab:gpu-speed} further shows that GPU acceleration benefits ELC
at larger coalition counts. Sampling reduces the cost of the initial LP,
but the subsequent min-norm selection can be slower and less reliable.
The attribution results in \cref{sec:generalized-freerider} show that
sampling particularly degrades distractor identification, while
useful-player identification remains comparatively strong.

%% file: tables/scaling_values.tex
\providecommand{\ScalingCpuSecondsTwenty}{15.129}
\providecommand{\ScalingGpuSecondsTwenty}{9.769}
\providecommand{\ScalingLpSecondsTwenty}{12.645}
\providecommand{\ScalingMinNormSecondsTwenty}{13.277}
\providecommand{\ScalingHalfLpSecondsTwenty}{7.134}
\providecommand{\ScalingTenthLpSecondsTwenty}{1.306}
\providecommand{\ScalingSpeedupTwenty}{1.55}
\providecommand{\ScalingHalfLpGapTwenty}{0.267}
\providecommand{\ScalingTenthLpGapTwenty}{0.271}
\providecommand{\ScalingCpuSecondsTwentyTwo}{145.965}
\providecommand{\ScalingGpuSecondsTwentyTwo}{46.525}
\providecommand{\ScalingLpSecondsTwentyTwo}{62.547}
\providecommand{\ScalingMinNormSecondsTwentyTwo}{65.457}
\providecommand{\ScalingHalfLpSecondsTwentyTwo}{44.586}
\providecommand{\ScalingTenthLpSecondsTwentyTwo}{7.322}
\providecommand{\ScalingSpeedupTwentyTwo}{3.14}
\providecommand{\ScalingHalfLpGapTwentyTwo}{0.138}
\providecommand{\ScalingTenthLpGapTwentyTwo}{0.27}
\providecommand{\ScalingCpuSecondsTwentyFour}{600.000$^{+}$}
\providecommand{\ScalingGpuSecondsTwentyFour}{214.134}
\providecommand{\ScalingLpSecondsTwentyFour}{317.264}
\providecommand{\ScalingMinNormSecondsTwentyFour}{330.111}
\providecommand{\ScalingHalfLpSecondsTwentyFour}{180.193}
\providecommand{\ScalingTenthLpSecondsTwentyFour}{25.628}
\providecommand{\ScalingSpeedupTwentyFour}{\underline{\hspace{1.5em}}}
\providecommand{\ScalingHalfLpGapTwentyFour}{0.133}
\providecommand{\ScalingTenthLpGapTwentyFour}{0.333}

%% file: tables/quality_recovery.tex
\begin{tabular}{rrrr}
\toprule
$n$ & $M$ & Gap ($10^{-3}$) & Rel. gap ($10^{-4}$) \\
\midrule
8 & 254 & $0.81\pm0.17$ & $4.00\pm0.81$ \\
10 & 1{,}022 & $0.96\pm0.30$ & $4.20\pm1.32$ \\
12 & 4{,}094 & $1.23\pm0.12$ & $4.96\pm0.50$ \\
14 & 16{,}382 & $1.34\pm0.14$ & $5.19\pm0.51$ \\
16 & 65{,}534 & $1.17\pm0.10$ & $4.17\pm0.35$ \\
18 & 262{,}142 & $1.64\pm0.15$ & $5.72\pm0.62$ \\
\bottomrule
\end{tabular}

%% file: tables/selector_combined_main.tex
\begin{tabular}{lrrr|rr}
\toprule
 & \multicolumn{3}{c|}{ELC (CPU)} & \multicolumn{2}{c}{Reference} \\
\cmidrule(lr){2-4}\cmidrule(lr){5-6}
Setting ($n$) & Time (s) & $d_{\rm nuc}$ ($10^{-3}$) & Gap ($10^{-3}$) & $d_{\rm mn}$ ($10^{-3}$) & LP + nucleolus (s) \\
\midrule
Random (6) & 0.253 & 1.97 & 0.59 & 98.28 & 1.117 \\
Random (7) & 0.190 & 1.66 & 0.79 & 24.38 & 1.065 \\
Random (8) & 0.384 & 2.84 & 0.28 & 81.36 & 1.369 \\
Medical (9) & 0.089 & 0.90 & 0.10 & 4.56 & 1.041 \\
Chemical (13) & 1.146 & 9.64 & 1.02 & 21.59$^{\dagger}$ & 45.321 \\
House (16) & 5.414 & 8.96 & 1.27 & 13.18 & 2150.567$^{\ddagger}$ \\
\bottomrule
\end{tabular}

%% file: sections/06_distractors.tex
\section{Identifying useful and harmful players}
\label{sec:identification}
\label{sec:generalized-freerider}

To test how context rarity and constraint sampling affect identification,
we generalize \cref{eq:simple-freerider}.
\paragraph{Contextual-distractor game.}
Let $U,D\subseteq N$ be disjoint sets of useful players and distractors;
the remaining players are neutral. Useful player $i$ contributes
$m_i>0$, and distractor $j$ contributes $a_j>0$.
A bonus $b>0$ becomes available when at least $\tau$ useful players are
present; each included distractor reduces it by $h_j>0$.
For a coalition $S\subseteq N$, define
\begin{equation}
 v(S)=\sum_{i\in S\cap U}m_i+\sum_{j\in S\cap D}a_j
 +\mathbf1[|S\cap U|\ge\tau]
       \left(b-\sum_{j\in S\cap D}h_j\right)_+,
 \label{eq:freerider-game}
\end{equation}
where $(z)_+:=\max\{z,0\}$.
Setting $\tau=|U|$, $a_j=a$, and $h_j=b$ recovers
\cref{eq:simple-freerider}. Adding distractor $j$ to a coalition with
at least $\tau$ useful players and no distractors reduces its value by
$h_j-a_j$ when $a_j<h_j\le b$.

\paragraph{Setup.}
We use $(n,|U|,|D|)=(10,6,2)$ and $(14,8,3)$, with neutral players
filling the remaining positions. For each of ten game seeds,
$\tau\in\{\lceil|U|/2\rceil,\lceil3|U|/4\rceil,|U|\}$;
$b=2|U|$ and $a_j=0.1|U|$. Useful effects vary across seeds.
Distractors have distinct, equally spaced $h_j/b$ values from $0.25$ to
$0.75$, permuted across players. Each game has five nested 10\%/50\% uniform
samples, shared across LP, LP + min-norm, and ELC. All methods are also
run on the full family. We evaluate average precision (AP): AP$^+$ ranks $x_i$ against useful-player membership,
and AP$^-$ ranks $-x_i$ against distractor membership. We also measure payoff signs and within-group strength ordering.

\paragraph{Results.}
\Cref{tab:strength} summarizes the results. First, with full coalitions, the AP$^+$ and AP$^-$ scores show that all methods identify all useful players correctly, but the Shapley value and the least core without a specified selector show limited performance in identifying distractors. Notably, ELC not only identifies the distractor players correctly, but also ranks them more accurately than the minimum-norm selector. In the 50\%/10\% sampling settings, AP$^-$ deteriorates across methods except LP, while AP$^+$ remains strong. This suggests that the sampling effect is more pronounced on distractor identification than on useful-player identification. Namely, while the least core with a selector is more advantageous than the Shapley value for identifying the distractors as \citet{nagarajan2026} pointed out, the sampling approach, a typical acceleration strategy in the literature~\citep{yan2021,gemp2024}, can reduce this advantage.
The loss is more pronounced at higher context thresholds and with fewer sampled coalitions (\cref{fig:sampling-n10,fig:sampling-n14}).

We observed that the larger drop in AP$^-$ than in AP$^+$ persists for ELC and
LP + min-norm when $|U|=|D|$, so unequal group sizes alone do not explain
this difference (see \cref{tab:strength-balanced}).

\begin{table}[t]
\centering\footnotesize\setlength{\tabcolsep}{3pt}
\caption{Full-coalition ELC identifies and orders both player groups;
sampling mainly reduces distractor identification.
AP$^+$/AP$^-$ measure useful/distractor identification.
Negative rate is the fraction of distractors receiving negative credit;
order accuracy is the fraction of distinct-strength pairs ranked correctly,
counting ties as incorrect. All scores range from 0 to 1; higher is better,
and bold marks the best mean within each budget, including ties.
Means average samples within each of 60 games, then across games.
Sampled methods share coalition observations.
Further evaluation details are in \cref{app:sampling-protocol}.}
\label{tab:strength}
\input{tables/strength_main}
\end{table}

%% file: tables/strength_main.tex
\begin{tabular}{llrrrrr}
\toprule
Constraints & Method & AP$^+$ & AP$^-$ & Distractor & Useful & Harm \\
 & & & & negative rate & order accuracy & order accuracy \\
\midrule
Full & Shapley & \textbf{1.000} & 0.794 & 0.639 & \textbf{1.000} & \textbf{1.000} \\
 & LP & \textbf{1.000} & 0.826 & 0.667 & 0.918 & 0.617 \\
 & LP + min-norm & \textbf{1.000} & \textbf{1.000} & \textbf{1.000} & 0.667 & 0.333 \\
 & ELC & \textbf{1.000} & \textbf{1.000} & \textbf{1.000} & \textbf{1.000} & \textbf{1.000} \\
\midrule
50\% sample & Shapley & 0.920 & 0.689 & 0.635 & 0.563 & 0.740 \\
 & LP & 0.998 & 0.827 & 0.693 & \textbf{0.922} & 0.636 \\
 & LP + min-norm & 0.999 & \textbf{0.967} & \textbf{0.962} & 0.667 & 0.432 \\
 & ELC & \textbf{1.000} & \textbf{0.967} & \textbf{0.962} & 0.899 & \textbf{0.959} \\
\midrule
10\% sample & Shapley & 0.883 & 0.546 & 0.474 & 0.545 & 0.689 \\
 & LP & 0.980 & 0.824 & 0.770 & \textbf{0.760} & 0.647 \\
 & LP + min-norm & \textbf{0.983} & \textbf{0.879} & \textbf{0.877} & 0.725 & 0.506 \\
 & ELC & \textbf{0.983} & 0.860 & 0.864 & 0.720 & \textbf{0.842} \\
\bottomrule
\end{tabular}

%% file: sections/07_applications.tex
\section{Applications to training-data valuation}
\label{sec:applications}
\label{sec:data-valuation}

We now compare the Shapley value and the least core methods in downstream tasks. We adopt the training-data valuation task from \citet{yan2021}. Further results and experiments are provided in \cref{sec:yan-protocol,app:marker}. 

\paragraph{Metrics.} Training points were removed in increments of 5\%, refitting the predictor
after removal. Removing the highest-ranked points (Best) tests whether
important examples receive high credit; lower retained accuracy is
better. Removing the lowest-ranked points (Worst) tests whether useful
examples are retained; higher is better. We integrate each
accuracy curve over 0--35\% removal and divide by 0.35 to obtain the
normalized area under the curve (AUC). Differences are
paired within repetition; uncertainty is computed over repetitions,
not over points on the removal curve.

\input{tables/yan_data_paired_auc}

\paragraph{Results.}
\Cref{tab:yan-data-paired-auc} shows that ELC improves mean removal AUC over the minimum-norm least core and Shapley value in nearly all comparisons. ELC-minus-baseline AUC differences are negative for best-first removal and positive for worst-first removal, except for Dogfish best-first removal at 10K queries versus the minimum-norm least core. 

%% file: tables/yan_data_paired_auc.tex
\begin{table*}[t]
\centering
\caption{Paired differences in normalized removal AUC (percentage points): ELC minus each baseline. Best removes the highest-ranked training points; Worst removes the lowest-ranked. Negative differences favor ELC for Best; positive differences favor ELC for Worst. CI denotes the 95\% bootstrap confidence interval for the mean difference. \(N_{\mathrm{query}}\) is the coalition-query budget.}
\label{tab:yan-data-paired-auc}
\small
\setlength{\tabcolsep}{5pt}
\begin{tabular}{rlrcr c}
\toprule
& & \multicolumn{2}{c}{ELC $-$ min-norm LC} & \multicolumn{2}{c}{ELC $-$ Shapley} \\
\cmidrule(lr){3-4}\cmidrule(lr){5-6}
$N_{\mathrm{query}}$ & Removal & Mean & 95\% CI & Mean & 95\% CI \\
\midrule
\multicolumn{6}{l}{\textit{Gaussian (20 repetitions per budget)}} \\
5K & Best $\downarrow$ & \(-0.463\) & \([-0.613,\,-0.306]\) & \(-3.312\) & \([-3.889,\,-2.769]\) \\
5K & Worst $\uparrow$ & \(+0.118\) & \([+0.037,\,+0.197]\) & \(+0.076\) & \([-0.206,\,+0.330]\) \\
\addlinespace[2pt]
10K & Best $\downarrow$ & \(-0.439\) & \([-0.582,\,-0.305]\) & \(-4.646\) & \([-5.330,\,-4.023]\) \\
10K & Worst $\uparrow$ & \(+0.121\) & \([+0.046,\,+0.193]\) & \(+0.468\) & \([+0.097,\,+0.831]\) \\
\addlinespace[2pt]
25K & Best $\downarrow$ & \(-0.355\) & \([-0.489,\,-0.217]\) & \(-4.292\) & \([-4.812,\,-3.779]\) \\
25K & Worst $\uparrow$ & \(+0.210\) & \([+0.146,\,+0.272]\) & \(+0.728\) & \([+0.446,\,+1.005]\) \\
\addlinespace[2pt]
50K & Best $\downarrow$ & \(-0.495\) & \([-0.673,\,-0.327]\) & \(-3.185\) & \([-3.609,\,-2.715]\) \\
50K & Worst $\uparrow$ & \(+0.165\) & \([+0.083,\,+0.255]\) & \(+0.382\) & \([+0.167,\,+0.608]\) \\
\midrule
\multicolumn{6}{l}{\textit{Dog--fish (5 repetitions per budget)}} \\
5K & Best $\downarrow$ & \(-0.074\) & \([-0.143,\,-0.005]\) & \(-0.402\) & \([-0.581,\,-0.217]\) \\
5K & Worst $\uparrow$ & \(+0.012\) & \([-0.002,\,+0.024]\) & \(+0.005\) & \([-0.057,\,+0.086]\) \\
\addlinespace[2pt]
10K & Best $\downarrow$ & \(+0.005\) & \([-0.036,\,+0.045]\) & \(-0.360\) & \([-0.500,\,-0.219]\) \\
10K & Worst $\uparrow$ & \(+0.033\) & \([+0.000,\,+0.071]\) & \(+0.043\) & \([+0.029,\,+0.064]\) \\
\addlinespace[2pt]
25K & Best $\downarrow$ & \(-0.095\) & \([-0.195,\,-0.007]\) & \(-0.593\) & \([-0.671,\,-0.512]\) \\
25K & Worst $\uparrow$ & \(+0.024\) & \([+0.005,\,+0.048]\) & \(+0.088\) & \([+0.019,\,+0.164]\) \\
\addlinespace[2pt]
50K & Best $\downarrow$ & \(-0.052\) & \([-0.086,\,-0.019]\) & \(-0.345\) & \([-0.414,\,-0.283]\) \\
50K & Worst $\uparrow$ & \(+0.036\) & \([-0.024,\,+0.088]\) & \(+0.326\) & \([+0.200,\,+0.486]\) \\
\bottomrule
\end{tabular}
\end{table*}

%% file: sections/08_conclusion.tex
\section{Conclusion}
\label{sec:conclusion}

We studied least-core attribution for contextual distractors, which cause substantial harm in rare, high-value contexts, from a computational perspective. Our experiments show that selector choice is crucial for distractor retrieval and that coalition sampling degrades distractor identification more severely than useful-player identification. We introduced entropic least core (ELC), a smooth log-sum-exp approximation whose minimizer converges to the nucleolus as the temperature $\eta\to0$. In our experiments, ELC approximates the nucleolus in less solver time than the successive-LP procedure. ELC and the minimum-norm selector both achieve high average precision for distractor identification in the tested games; ELC further ranks distractors more accurately by harm.

\paragraph{Limitations and future work.} Least-core attribution for model explanation is still developing, with two open questions. First, the practical advantages of the least core over the Shapley value remain incompletely understood. Existing experiments span feature attribution, data valuation, and label correction, but these are general tasks. Identifying tasks that benefit specifically from contextual-distractor retrieval remains an important direction. Second, full coalition enumeration has exponential cost in the number of players. Recent work proposes approximation methods, but the method of \citet{gemp2024} restricts payoffs to be nonnegative, and coalition sampling as in \citet{yan2021} can degrade distractor identification. Designing a sampling method, or other approximation techniques, that preserves this ability while providing approximation guarantees is an open direction for scalable least-core computation.

%% file: sections/A_appendix.tex
\section{Additional related work}
\label{app:additional-related}

We expand the attribution literature discussed in \cref{sec:related}.

The Shapley value~\citep{shapley1953} assigns credit by averaging a
player's marginal contribution over the orders in which players can join
a coalition. It supports local feature explanations in
\citet{strumbelj2010} and SHAP~\citep{lundberg2017}, global feature
importance in SAGE~\citep{covert2020}, and the valuation of training
examples in Data Shapley~\citep{ghorbani2019}.
Related game-theoretic approaches also study groups of players and their
interactions, including in computer-vision explanations~\citep{grabisch1999an,sumiyasu2022,sumiyasu2024,pred-diff,chang2025}.
The choice of players and coalition values determines what an attribution
explains. Even for feature attribution, how absent features are represented affects
the explanation~\citep{sundararajan2020}; \citet{aas2021} specifically
address dependencies between features.

Computational work reduces the cost of Shapley attribution through
permutation sampling~\citep{castro2009polynomial,mitchell2022} or learned estimators such as
FastSHAP~\citep{jethani2022}.
These methods approximate an averaging-based allocation rule.
Semivalues generalize this rule by changing the weighting of coalition
sizes~\citep{dubey1981,kwon2022beta,wang2023data}.
Such averaging is natural when the target is average contribution, but
it can obscure harm concentrated in important contexts, as formalized by
\citet{nagarajan2026}.
The least core instead minimizes the largest coalition shortfall.
This distinction concerns which allocation is sought, separately from
how accurately it is computed: our comparisons examine both the allocation
rule and the effect of coalition sampling on the resulting explanation.

\section{Least-core selectors and entropic regularization}
\label{app:selector-theory}

This section supplies the selector definitions and proofs used in
\cref{sec:background,sec:method}. We describe the nucleolus, establish the
entropic variational representation, and apply \citet{alvarez2000} to
identify the ELC limit.

\subsection{Nucleolus as a lexicographic least-core selector}
\label{app:nucleolus}

Recall that $N=\{1,\ldots,n\}$ is the player set and
$\mathcal S=2^N\setminus\{\emptyset,N\}$ is the family of nonempty
proper coalitions, with $M=|\mathcal S|=2^n-2$.
The nucleolus refines the least core by minimizing coalition excesses
lexicographically. For an efficient payoff $x$, recall the excess of a
coalition $S$:
\[
  e_S(x) := v(S)-x(S).
\]
Large $e_S(x)$ means that coalition $S$ is underpaid by $x$.
Fix an enumeration $S_1,\ldots,S_M$ of $\mathcal S$.
For each payoff $x$, let $(\sigma_1(x),\ldots,\sigma_M(x))$ be a
permutation of $\{1,\ldots,M\}$ such that
\[
  e_{S_{\sigma_1(x)}}(x)\ge e_{S_{\sigma_2(x)}}(x)\ge\cdots\ge e_{S_{\sigma_M(x)}}(x).
\]

Here $\sigma_k(x)$ is the index of the coalition with the $k$th-largest
excess at $x$.  The coalitions $S_1,\ldots,S_M$ are fixed, while the
sorting permutation depends on $x$.  Ties may be ordered arbitrarily, since
they do not change the sorted vector.
The nucleolus is the efficient payoff whose sorted deficit vector is
lexicographically minimal:
\[
  x_{\mathrm{nuc}}
  := \argmin_{x:\,x(N)=v(N)}^{\mathrm{lex}}
     \bigl(e_{S_{\sigma_1(x)}}(x),e_{S_{\sigma_2(x)}}(x),\ldots,e_{S_{\sigma_M(x)}}(x)\bigr).
\]
The first coordinate of this lexicographic program is the least-core
value $\lcv$.
Thus the nucleolus lies in the least core, and can be read as a
selector for $\mathrm{LC}(\lcv)$.
It is stricter than picking an arbitrary least-core point: after
minimizing the worst deficit, it minimizes the next-worst remaining
deficit, then the next one, until a single payoff remains.

The finite procedure is a sequence of linear programs.
At level $k$, some coalitions have already been fixed at previous
deficit levels $\theta_1,\ldots,\theta_{k-1}$.
Among all payoffs satisfying those equalities, solve
\[
\begin{aligned}
  \min_{x,\theta_k}\quad & \theta_k \\
  \mathrm{s.t.}\quad
    & x(N)=v(N),\\
    & e_S(x)=\theta_j
      \quad\text{for all coalitions fixed at level }j<k,\\
    & e_S(x)\le \theta_k
      \quad\text{for every unfixed coalition }S.
\end{aligned}
\tag{NUC-$k$}
\]
A coalition is fixed only if its excess equals the current optimum
throughout the optimal face. Tightness at one LP solution is insufficient:
a constraint may be tight at the returned vertex but slack at another
optimal payoff. The following procedure tests this by maximizing each
coalition's slack over the face.

Let $F$ contain the fixed coalitions and $\theta(S)$ their fixed excess levels:
\[
\begin{array}{ll}
\textbf{Input:} & \{(S,v(S)):S\in\mathcal S\},\ v(N).\\
\textbf{Initialize:} & F\leftarrow\emptyset,\ k\leftarrow1.\\
\textbf{Repeat:} &
  \text{solve (NUC-$k$), obtaining its optimal value }\theta_k.\\
& T\leftarrow\emptyset.\\
& \text{For each }S\in\mathcal S\setminus F:\\
& \quad \text{maximize } \theta_k-e_S(x)
       \text{ over the optimal face of (NUC-$k$).}\\
& \quad \text{If the maximum is }0,\text{ add }S\text{ to }T.\\
& \text{Set }F\leftarrow F\cup T\text{ and }\theta(S)\leftarrow\theta_k
   \text{ for }S\in T.\\
& k\leftarrow k+1.\\
\textbf{Until:} &
  x \text{ is uniquely determined.}\\
\textbf{Output:} & x.
\end{array}
\]
\subsection{Entropic representation}
\label{app:entropy-representation}

Recall that, for $\eta>0$, the smoothed maximum excess is defined by
\[
  \slcv(x):=\eta\log\sum_{S\in\cS}
  \exp\!\left(\frac{e_S(x)}{\eta}\right).
\]
The following identity gives the entropic interpretation of ELC and
its approximation bound for the maximum excess.

\begin{proposition}[Entropic variational representation]
\label{prop:entropic-variational}
Let $\Delta$ be the probability simplex on $\cS$, let $w\in\Delta$
have full support, and let $a\in\R^M$ and $\eta>0$. Then
\[
  \max_{\lambda\in\Delta}
  \left\{\sum_S\lambda_S a_S-\eta\KL(\lambda\|w)\right\}
  =\eta\log\sum_S w_S e^{a_S/\eta}.
\]
The unique maximizer is
$\lambda^*_S=w_S e^{a_S/\eta}/\sum_T w_T e^{a_T/\eta}$.
For $a_S=e_S(x)$ and $w_S=1/M$, the value equals
$\slcv(x)-\eta\log M$, and
\[
  \etahat(x)\le\slcv(x)\le\etahat(x)+\eta\log M,
  \qquad \etahat(x):=\max_{S\in\cS}e_S(x).
\]
\end{proposition}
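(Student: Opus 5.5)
The plan is to prove this as a Gibbs variational principle, by comparing every $\lambda\in\Delta$ with the candidate Gibbs distribution. Set $Z:=\sum_T w_T e^{a_T/\eta}$, which is strictly positive and finite because $w$ has full support. Define $\lambda^*_S:=w_S e^{a_S/\eta}/Z$; then $\lambda^*\in\Delta$ and $\lambda^*_S>0$ for all $S$.

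First I will rewrite the objective using $\lambda^*$. Fix $\lambda\in\Delta$ and use the convention $0\log 0=0$, so that $\KL(\lambda\|w)=\sum_{S:\lambda_S>0}\lambda_S\log(\lambda_S/w_S)$, which is finite. On the support of $\lambda$ we have
\[
  \log(\lambda_S/w_S)=\log(\lambda_S/\lambda^*_S)+a_S/\eta-\log Z.
\]
Multiplying by $\lambda_S$, summing over the support, and using $\sum_S\lambda_S=1$ gives
\[
  \sum_S\lambda_S a_S-\eta\KL(\lambda\|w)=\eta\log Z-\eta\KL(\lambda\|\lambda^*).
\]
Gibbs' inequality says $\KL(\lambda\|\lambda^*)\ge0$, with equality if and only if $\lambda=\lambda^*$; this uses the strict convexity of $t\log t$, or Jensen's inequality with strict concavity of $\log$. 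It follows at once that the maximum equals $\eta\log Z$, that it is attained, and that $\lambda^*$ is the unique maximizer. The only point needing care is the zero-mass coordinates of $\lambda$. There is no real obstacle, but I would handle these coordinates explicitly so that the identity remains valid at the boundary of the simplex.

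Next, specialize to $a_S=e_S(x)$ and $w_S=1/M$. Then $\eta\log Z=\eta\log\bigl(M^{-1}\sum_S e^{e_S(x)/\eta}\bigr)=\slcv(x)-\eta\log M$, which gives the stated value.

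For the two-sided bound, let $m:=\etahat(x)$. Every summand satisfies $e^{e_S(x)/\eta}\le e^{m/\eta}$, and at least one summand equals $e^{m/\eta}$. Hence
\[
  e^{m/\eta}\le\sum_S e^{e_S(x)/\eta}\le Me^{m/\eta}.
\]
Applying the increasing map $t\mapsto\eta\log t$ to these inequalities gives $\etahat(x)\le\slcv(x)\le\etahat(x)+\eta\log M$.

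As an alternative route, the lower bound also follows from the variational formula by taking $\lambda$ to be a point mass on an argmax coalition, since then the KL term equals $\eta\log M$. The upper bound also follows because $\KL\ge0$.
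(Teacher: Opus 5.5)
Your proof is correct and follows the paper's argument: the paper also substitutes $a_S=\eta(\log\lambda^*_S-\log w_S+\log Z)$ to get $\eta\log Z-\eta\KL(\lambda\|\lambda^*)$ and concludes by Gibbs' inequality. For the sandwich bound, the paper uses your ``alternative route'' (a point mass on a maximizing coalition, and $\sum_S\lambda_S e_S(x)\le\etahat(x)$ with $\KL\ge0$), while your direct summand comparison is an equally valid and slightly more elementary way to reach the same inequalities.
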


\begin{proof}
Set $Z=\sum_S w_S e^{a_S/\eta}$. The definition of $\lambda^*$ gives
\[
  a_S=\eta\bigl(\log\lambda^*_S-\log w_S+\log Z\bigr).
\]
For any $\lambda\in\Delta$, substituting this expression for $a_S$
into the objective and using $\sum_S\lambda_S=1$ yields
\[
  \sum_S\lambda_S a_S-\eta\KL(\lambda\|w)
  =\eta\log Z-\eta\KL(\lambda\|\lambda^*).
\]
The KL divergence is nonnegative and vanishes only at
$\lambda=\lambda^*$, proving the identity and uniqueness.
For uniform $w$, evaluating the variational objective at a point mass
on a maximizing coalition gives the lower bound
$\etahat(x)-\eta\log M$; its upper bound is $\etahat(x)$ because
$\sum_S\lambda_S e_S(x)\le\etahat(x)$ and $\KL(\lambda\|w)\ge0$.
Adding $\eta\log M$ proves the stated bounds.
\end{proof}

\subsection{ELC as an exponential-penalty trajectory}
\label{app:zero-temperature-selector}

We prove \cref{thm:elc-properties}, including convergence to the nucleolus,
by expressing ELC as an exponential-penalty trajectory. We first establish
the compactness needed for existence and for the penalty theorem.

\begin{lemma}[Compactness of excess sublevel sets]
\label{lem:excess-sublevels}
For $n\ge2$ and $\cS=2^N\setminus\{\emptyset,N\}$, let
$\mathcal X=\{x\in\R^n:x(N)=v(N)\}$. For every $B\in\R$, the set
\[
  K_B:=\{x\in\mathcal X:\etahat(x)\le B\}
\]
is compact, where $\etahat(x):=\max_{S\in\cS}e_S(x)$.
In particular, the least core is nonempty and compact.
\end{lemma}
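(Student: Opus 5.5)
The set $K_B$ is cut out of the affine hyperplane $\mathcal X$ by the finitely many affine inequalities $e_S(x)\le B$, $S\in\cS$. It is therefore closed, being the intersection of closed sets, and it suffices to show that it is bounded. The plan is to bound every coordinate $x_i$ from below and from above using only singleton and co-singleton coalitions, combined with efficiency. The hypothesis $n\ge2$ guarantees that, for each $i$, both $\{i\}$ and $N\setminus\{i\}$ are nonempty proper coalitions, so both lie in $\cS$.

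\textbf{Lower bound.} For $x\in K_B$, the singleton constraint gives $v(\{i\})-x_i\le B$, so $x_i\ge v(\{i\})-B$.

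\textbf{Upper bound.} The co-singleton constraint gives $v(N\setminus\{i\})-x(N\setminus\{i\})\le B$. Efficiency gives $x(N\setminus\{i\})=v(N)-x_i$, and substituting yields
\[
  x_i\le v(N)-v(N\setminus\{i\})+B .
\]
Hence each coordinate lies in a fixed interval independent of $x$, so $K_B$ is contained in a box. Being closed and bounded in $\R^n$, $K_B$ is compact.

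\textbf{Least core.} The map $\etahat$ is a finite maximum of affine functions, hence continuous on $\mathcal X$. Choose any $x^0\in\mathcal X$, for instance the uniform efficient payoff $(v(N)/n)\one$, and set $B_0=\etahat(x^0)$. Then $K_{B_0}$ is nonempty and compact, so $\etahat$ attains its minimum $\lcv$ on it. This minimum is also the global minimum over $\mathcal X$, because any point outside $K_{B_0}$ has $\etahat>B_0\ge\lcv$. Thus $\lcv$ is attained, and $\mathrm{LC}(\lcv)=K_{\lcv}$ is nonempty and, by the first part, compact.

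The argument is elementary. The only point needing care is the use of $n\ge2$: for $n=1$ the family $\cS$ is empty and the co-singleton $N\setminus\{i\}$ is not available. A secondary point is remembering that boundedness must come from pairing the co-singleton constraint with efficiency, since the singleton constraints alone only bound the coordinates from below.
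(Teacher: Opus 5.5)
Your proof is correct and follows the paper's argument essentially step for step: you get closedness, then a lower bound on each $x_i$ from the singleton constraints and an upper bound from the co-singleton constraints combined with efficiency, and finally you minimize $\etahat$ over a nonempty sublevel set $K_{B}$ to obtain a nonempty, compact least core. Your remarks that $n\ge2$ is what places $\{i\}$ and $N\setminus\{i\}$ in $\cS$, and that $\mathrm{LC}(\lcv)=K_{\lcv}$, are small clarifications the paper leaves implicit.
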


\begin{proof}
We show that $K_B$ is bounded and closed in $\R^n$, which implies
compactness by the Heine--Borel theorem.
Membership in $K_B$ means that $v(S)-x(S)\le B$ for every
$S\in\cS$. Taking $S=\{i\}$ gives
\[
  v(\{i\})-x_i\le B,
  \qquad\text{so}\qquad x_i\ge v(\{i\})-B.
\]
Taking $S=N\setminus\{i\}$ and using efficiency, $x(N)=v(N)$, we have
$x(N\setminus\{i\})=x(N)-x_i=v(N)-x_i$, which gives
\[
  v(N\setminus\{i\})-v(N)+x_i\le B,
  \qquad\text{so}\qquad
  x_i\le B+v(N)-v(N\setminus\{i\}).
\]
Thus every coordinate has a finite lower and upper bound, and $K_B$
is bounded. It is also closed, since $\etahat$ is continuous.

We next show that the least core is nonempty and compact.
Choose any $\bar x\in\mathcal X$ and fix $B=\etahat(\bar x)$.
Then $K_B$ is nonempty and compact, so $\etahat$ attains a minimum
there. Outside $K_B$, $\etahat(x)>B$, so this is also a minimum on
$\mathcal X$. The least core is therefore nonempty; it is a closed subset of this
compact $K_B$, and hence is compact.
\end{proof}

We use \cref{prop:entropic-variational} to prove existence and
approximation quality, then apply the exponential-penalty theorem of
\citet{alvarez2000} to identify the limiting selector.

\elcproperties*

\begin{proof}[Proof of \cref{thm:elc-properties}]
\textbf{Existence and uniqueness (statement 1).}
Let $\mathcal X:=\{x\in\R^n:x(N)=v(N)\}$ be the efficient-payoff
hyperplane, and choose any $\bar x\in\mathcal X$. Recall that
$e_S(x):=v(S)-x(S)$. The ELC objective is
\[
  \varepsilon_\eta(x):=\eta\log\sum_{S\in\cS}
      \exp\!\left(\frac{e_S(x)}{\eta}\right).
\]
Consider
\[
  L_\eta:=\{x\in\mathcal X:\varepsilon_\eta(x)
      \le\varepsilon_\eta(\bar x)\}.
\]
This set contains $\bar x$ and is closed by continuity.

Write $\hat\varepsilon(x):=\max_{S\in\cS}e_S(x)$. By the
log-sum-exp bound in \cref{prop:entropic-variational}, it satisfies
$\hat\varepsilon(x)\le\varepsilon_\eta(x)$. Hence
$L_\eta\subseteq K_{\varepsilon_\eta(\bar x)}$, where
$K_B:=\{x\in\mathcal X:\hat\varepsilon(x)\le B\}$ is compact by
\cref{lem:excess-sublevels}.
The continuous ELC objective attains its minimum on the compact set
$L_\eta$, and points outside this set cannot improve that minimum.
By \cref{lem:smoothness}, $\varepsilon_\eta$ is strictly convex on $\mathcal X$,
so its minimizer $x_\eta$ is unique.

\textbf{Approximation quality (statement 2).}
Write $\varepsilon^*:=\min_{x\in\mathcal X}\hat\varepsilon(x)$ for the
least-core value, and choose a payoff $x^0$ attaining this minimum.
Such a payoff exists by \cref{lem:excess-sublevels}. Let $M:=|\cS|$.
Then \cref{prop:entropic-variational} and
optimality of $x_\eta$ give
\[
  \varepsilon^*\le\hat\varepsilon(x_\eta)
  \le\varepsilon_\eta(x_\eta)
  \le\varepsilon_\eta(x^0)\le\varepsilon^*+\eta\log M.
\]
The first inequality holds because $\varepsilon^*$ is the minimum of
$\hat\varepsilon$ on $\mathcal X$. The second is the lower
log-sum-exp bound. The third follows because $x_\eta$ minimizes
$\varepsilon_\eta$, and the last applies the upper log-sum-exp bound
to $x^0$, for which $\hat\varepsilon(x^0)=\varepsilon^*$.
This proves the hard-gap bound and convergence of the smooth value.

\textbf{Identifying the selector (statement 3).}
We use the following affine case of \citet[Theorem~3.1]{alvarez2000}.
Let $f_0,f_1,\ldots,f_m:\R^d\to\R$ be affine functions, and let $Z^*$
be the set of solutions to
\[
  \min_{z\in\R^d} f_0(z)
  \quad\text{s.t.}\quad f_i(z)\leq0\quad(i=1,\ldots,m).
\]
If $Z^*$ is nonempty and compact, then, for every $\eta>0$, the function
\[
  f_0(z)+\eta\sum_{i=1}^m\exp\bigl(f_i(z)/\eta\bigr)
\]
has a unique minimizer.  As $\eta$ tends to zero through positive values,
this minimizer converges to the point in $Z^*$ that lexicographically
minimizes the constraint values $f_1(z),\ldots,f_m(z)$ sorted from largest
to smallest.  Alvarez calls this point the \emph{absolute minimizer}.

To apply this result, introduce a scalar $t$ that upper-bounds every
excess and write the least-core problem as
\[
  P:\quad \min_{x\in\mathcal X,\,t\in\R} t
  \quad\text{s.t.}\quad e_S(x)-t\le0\quad(S\in\cS).
\]
We first check the affine-function assumption.  Eliminate efficiency by writing
$x=\bar x+Qy$, where $y\in\R^{n-1}$ and the columns of
$Q\in\R^{n\times(n-1)}$ form a basis for
$\{u\in\R^n:\sum_i u_i=0\}$.
The functions $f_0(y,t)=t$ and
$f_S(y,t)=e_S(\bar x+Qy)-t$, one for each $S\in\cS$, are affine
in the joint variable $(y,t)$.

Next, the optimal set of $P$ is
$S(P)=\{(x,\varepsilon^*):x\in\mathcal X,
\hat\varepsilon(x)=\varepsilon^*\}$, which is nonempty and compact
by \cref{lem:excess-sublevels}.  The change of coordinates is a continuous
bijection with continuous inverse, so the optimal set in $(y,t)$ is also
nonempty and compact.  Both assumptions of the stated affine case are
therefore satisfied.

It remains to identify ELC with the exponential penalty in the theorem.
For $P$, the penalized problem is
\begin{equation}\label{eq:exponential-penalty}
  \min_{x\in\mathcal X,\,t\in\R}\Phi_\eta(x,t),
  \qquad
  \Phi_\eta(x,t):=t+\eta\sum_{S\in\cS}
       \exp\bigl((e_S(x)-t)/\eta\bigr).
\end{equation}
For fixed $x$, this function is strictly convex in $t$, and its
stationarity condition is $\sum_{S\in\cS}\exp((e_S(x)-t)/\eta)=1$.
Therefore its minimizer is $t=\varepsilon_\eta(x)$, and
$\min_t\Phi_\eta(x,t)=\varepsilon_\eta(x)+\eta$.

The joint minimizer is consequently
$(x_\eta,\varepsilon_\eta(x_\eta))$.  The theorem now gives convergence
of this pair to the absolute minimizer on $S(P)$ as $\eta$ tends to zero
through positive values.

Finally, on $S(P)$ the constraint values are $e_S(x)-\varepsilon^*$.
Subtracting the same constant $\varepsilon^*$ from every excess leaves
their lexicographic order unchanged.  Restricting to the least core has
already minimized the largest excess, so lexicographically minimizing
the remaining excesses gives the nucleolus, as defined in
\cref{app:nucleolus}.  The absolute minimizer's payoff component is thus
the nucleolus $x_{\mathrm{nuc}}$.
Hence $x_\eta\to x_{\mathrm{nuc}}$.
\end{proof}

\paragraph{Relation to \citet{abernethy2024}.}
\leavevmode %
\citet{abernethy2024} also study lexicographic optimization through
exponential-loss minimization. Our convergence result for the exact ELC
path, however, follows from the earlier exponential-penalty theorem of
\citet{alvarez2000}.
\citet{abernethy2024} additionally establish convergence guarantees under
lexicographic stability, including for approximate minimizers with
controlled errors, and derive convergence-rate bounds; we do not use
these results.
For optimization, they analyze a multiplicative-weights algorithm that,
with their choice of learning rates, is equivalent to Frank--Wolfe,
whereas we use projected gradient descent directly in payoff space
under the efficiency constraint
(\cref{sec:elc-computation}).

\section{Smoothness and optimization rates}
\label{app:proofs}

This section establishes the curvature and convergence bounds used by
the projected-gradient method in \cref{sec:method}.

\subsection{Smoothness of the ELC objective}

We bound the Hessian to justify the step size $4\eta/n$ and show strict
convexity on the efficiency hyperplane, which ensures uniqueness.

\begin{lemma}[Smoothness of the ELC objective]
\label{lem:smoothness}
The function $\slcv$ is convex and $\beta$-smooth with
$\beta=n/(4\eta)$, meaning that its gradient is $\beta$-Lipschitz
in the Euclidean norm.
On the efficiency hyperplane it is strictly convex.
\end{lemma}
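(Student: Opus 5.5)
The plan is to compute the gradient and Hessian explicitly and read both claims off a covariance representation. Write the excesses as affine functions $e_S(x)=v(S)-\langle \one_S,x\rangle$, where $\one_S\in\{0,1\}^n$ is the indicator vector of $S$. Then $\slcv(x)=\eta\log\sum_S\exp(e_S(x)/\eta)$ is the log-sum-exp of an affine map. \Cref{eq:grad} already gives the gradient $\nabla\slcv(x)=-\E_{S\sim\gibbsdist}[\one_S]$. Differentiating once more, we expect
\[
  \nabla^2\slcv(x)=\frac1\eta\,\Cov_{S\sim\gibbsdist}(\one_S).
\]
This is the covariance matrix of the random membership vector, so it is positive semidefinite and $\slcv$ is convex.

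\textbf{Smoothness.} For a unit vector $u$ we have
\[
  u^\top\nabla^2\slcv(x)\,u=\frac1\eta\Var_{S\sim\gibbsdist}\bigl(\langle u,\one_S\rangle\bigr).
\]
The quantity $\langle u,\one_S\rangle=\sum_{i\in S}u_i$ ranges over an interval. By Cauchy--Schwarz its maximum minus its minimum over subsets is at most $\sum_i|u_i|\le\sqrt n$. Popoviciu's inequality bounds the variance of a variable supported on an interval of length $L$ by $L^2/4$. This gives $u^\top\nabla^2\slcv\,u\le n/(4\eta)$, so the gradient is $\beta$-Lipschitz with $\beta=n/(4\eta)$, by the mean value theorem applied to the gradient.

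\textbf{Strict convexity.} We show the Hessian restricted to the tangent space $\{u:\sum_iu_i=0\}$ is positive definite. Since $\gibbsdist$ has full support on $\cS$, the variance vanishes only if $\langle u,\one_S\rangle$ is constant, say equal to $c$, on all of $\cS$. Testing singletons, $u_i=c$ for every $i$. Testing a pair $\{i,j\}$, which is a proper coalition when $n\ge3$, gives $2c=c$, so $c=0$ and hence $u=0$. When $n=2$, $\cS=\{\{1\},\{2\}\}$ and singletons already force $u_1=u_2=c$; combined with $u_1+u_2=0$ this again gives $u=0$. A positive definite Hessian along the hyperplane then yields strict convexity there.

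The step needing the most care is this nondegeneracy argument: the Hessian is singular in the direction $\one$ (adding a constant to every $x_i$ shifts all excesses by amounts depending on $|S|$, but not in general to zero variance), so strict convexity holds only on the hyperplane, and the full-support and proper-coalition arguments must be used precisely. The smoothness constant is routine once the covariance form is in hand.
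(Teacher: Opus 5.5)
Your proof is correct. It starts from the same Hessian as the paper, $\nabla^2\slcv(x)=\frac1\eta\Cov_{S\sim\gibbsdist}(\ind{S})$, but handles both quantitative steps differently.

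For smoothness, the paper bounds the largest eigenvalue of this positive semidefinite matrix by its trace, $\frac1\eta\sum_i m_i(x)(1-m_i(x))\le n/(4\eta)$, using only that each coordinate of $\ind{S}$ is Bernoulli. You instead apply Popoviciu's inequality to the range of $\langle u,\ind{S}\rangle$. Before Cauchy--Schwarz, your route gives the direction-dependent bound $u^\top\nabla^2\slcv(x)\,u\le\|u\|_1^2/(4\eta)$, so $\slcv$ is $1/(4\eta)$-smooth with respect to $\|\cdot\|_1$. The trace argument reaches the Euclidean constant in one line.

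For strict convexity, the paper uses singletons alone. For nonzero $d$ with $\one^\top d=0$, the values $d_i=\langle d,\ind{\{i\}}\rangle$ cannot all coincide, so $\langle d,\ind{S}\rangle$ has positive variance under the full-support Gibbs distribution. That is exactly your $n=2$ argument, and it works unchanged for every $n\ge2$, so your case split is unnecessary.

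Your closing remark is wrong, though. For $n\ge3$ your pair test never uses $\sum_i u_i=0$, so it shows the Hessian is positive definite on all of $\R^n$. In particular it is not singular along $\one$, since $\langle\one,\ind{S}\rangle=|S|$ takes the values $1,\ldots,n-1$ on $\cS$, at least two distinct values. Hence $\slcv$ is strictly convex on the whole space when $n\ge3$, contrary to your statement that strict convexity holds only on the hyperplane. The direction $\one$ is degenerate only for $n=2$, where $\slcv(x+t\one)=\slcv(x)-t$. That is the only case in which restricting to the efficiency hyperplane is genuinely needed.
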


\begin{proof}
The gradient and Hessian of $\slcv$ are
\begin{align}
  \nabla_{x} \slcv(x)
  &= - \E_{S \sim \gibbsdist}[\ind{S}],
  \\
  \nabla^2_{x} \slcv(x)
  &= \frac{1}{\eta}
    \Cov_{S \sim \gibbsdist}\bigl(\ind{S}\bigr)
  \succeq 0,
\end{align}
where $(\ind{S})_i=\mathbf1[i\in S]$ is the coalition-membership vector,
$\E$ denotes expectation, $\Cov$ covariance, and $\succeq0$ positive
semidefiniteness.
Convexity follows.

Each coordinate of $\ind{S}$ is Bernoulli: writing $\Var$ for variance and
$m_i(x):=\Prob_{S\sim\gibbsdist}[i\in S]$,
\[
  \Var_{S\sim\gibbsdist}\bigl((\ind{S})_i\bigr)
  = m_i(x)\bigl(1-m_i(x)\bigr)
  \le \tfrac{1}{4}.
\]
Writing $\|\cdot\|_{\rm op}$ for the Euclidean operator norm and
$\mathrm{tr}$ for the trace, the positive semidefinite Hessian's operator norm equals
its largest eigenvalue and is at most its trace:
\begin{align}
  \norm{\nabla^2_{x} \slcv(x)}_{\mathrm{op}}
  &= \frac{1}{\eta}\,
    \norm{\Cov_{S\sim\gibbsdist}(\ind{S})}_{\mathrm{op}}
  \le \frac{1}{\eta}\,
    \mathrm{tr}\bigl(\Cov_{S\sim\gibbsdist}(\ind{S})\bigr)
  \le \frac{n}{4\eta}.
\end{align}
Hence $\nabla\slcv$ is $\beta$-Lipschitz with $\beta=n/(4\eta)$.

For strict convexity on the efficiency hyperplane, take a nonzero
$d\in\R^n$ with $\one^{\top}d=0$.
The singleton coalitions have positive Gibbs probability and give
$d^{\top}\ind{\{i\}}=d_i$. These values cannot all be equal: together
with $\sum_i d_i=0$, equality would imply $d=0$. Hence
$d^{\top}\ind{S}$ has positive variance.
Thus $d^{\top}\Cov(\ind{S})d>0$, so the Hessian is positive definite on
the tangent space $\{d\in\R^n:\one^\top d=0\}$.
\end{proof}

\subsection{Projected-gradient convergence rates}

At fixed temperature, the smoothness bound gives the following
objective-error rates for ordinary and accelerated projected gradient
descent. Both methods preserve efficiency.

\begin{proposition}[Projected-gradient ELC]
  \label{prop:pgd-rate}
  Let $x_T$ be the output after $T\ge 1$ iterations of
  \cref{eq:grad-update} initialized at $x_0$ with $x_0(N)=v(N)$.
  Then the following hold:
  \begin{enumerate}
    \item The update preserves efficiency; hence, $x_T(N)=v(N)$.
    \item With step size $\alpha=1/\beta=4\eta/n$, if $x^\star$
          is the unique minimizer of \cref{eq:elc}, then
          \[
            \slcv(x_T)-\slcv(x^{\star})
            \le \frac{n}{8\eta T}\,\norm{x_0-x^{\star}}_2^2.
          \]
    \item Applying Nesterov acceleration to the same projected-gradient
          problem also preserves efficiency and improves the standard
          smooth-convex rate to
          \[
            \slcv(x_T)-\slcv(x^{\star})
            \le \frac{n}{2\eta (T+1)^2}\,\norm{x_0-x^{\star}}_2^2.
          \]
  \end{enumerate}
\end{proposition}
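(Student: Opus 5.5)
The plan is to reduce the statement to standard smooth-convex optimization on the affine subspace $\mathcal X=\{x:x(N)=v(N)\}$, using \cref{lem:smoothness} for the smoothness constant and statement 1 of \cref{thm:elc-properties} for the existence of the minimizer $x^\star$.

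\textbf{Statement 1.} This follows from one observation. The update direction is $g(x):=r(x)-\bar r(x)\one$, and $\one^\top g(x)=\sum_i r_i(x)-n\bar r(x)=0$. Hence every step leaves $x(N)$ unchanged, and induction from $x_0(N)=v(N)$ gives $x_T(N)=v(N)$.

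Moreover, $g(x)=P\nabla\slcv(x)$, where $P=I-\frac1n\one\one^\top$ is the orthogonal projector onto the tangent space $\{d:\one^\top d=0\}$. Since $x-\alpha\nabla\slcv(x)$ differs from $x-\alpha g(x)$ only by a multiple of $\one$, the update is exactly Euclidean projected gradient descent $x\leftarrow\Pi_{\mathcal X}(x-\alpha\nabla\slcv(x))$. For Nesterov acceleration, the extrapolation points are affine combinations of iterates in $\mathcal X$, so they also lie in $\mathcal X$. The gradient steps then keep every iterate efficient by the same argument.

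\textbf{Statement 2.} Restrict $f:=\slcv$ to $\mathcal X$. Parametrize $x=\bar x+Qy$ with $Q$ having orthonormal columns spanning the tangent space, and set $h(y)=f(\bar x+Qy)$. Then $\nabla h=Q^\top\nabla f$ and $\nabla^2 h=Q^\top\nabla^2 f\,Q$. By \cref{lem:smoothness}, $h$ is convex and $\beta$-smooth with $\beta=n/(4\eta)$, because $\|Q^\top AQ\|_{\mathrm{op}}\le\|A\|_{\mathrm{op}}$.

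The update in $x$ corresponds to plain gradient descent $y\leftarrow y-\alpha\nabla h(y)$, because $Qg=QQ^\top\nabla f=P\nabla f$. Distances are preserved: $\|x-x^\star\|_2=\|y-y^\star\|_2$. The classical bound for gradient descent on a convex $\beta$-smooth function with step $1/\beta$ is
\[
  h(y_T)-h(y^\star)\le\frac{\beta}{2T}\|y_0-y^\star\|_2^2 .
\]
Substituting $\beta=n/(4\eta)$ gives $\frac{n}{8\eta T}\|x_0-x^\star\|_2^2$. The ingredients behind the classical bound are the descent lemma and the three-point inequality, and I would cite rather than rederive them.

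\textbf{Statement 3.} The same reduction applies. In $y$-coordinates, Nesterov's accelerated gradient method with step $1/\beta$ satisfies
\[
  h(y_T)-h(y^\star)\le\frac{2\beta\|y_0-y^\star\|^2}{(T+1)^2}.
\]
This equals $\frac{n}{2\eta(T+1)^2}\|x_0-x^\star\|_2^2$. Mapping back through $x=\bar x+Qy$ shows that the accelerated iteration in payoff space with the projected direction is the same method.

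\textbf{Main obstacle.} The delicate step is bookkeeping rather than analysis. One must verify that the projected update coincides exactly with unconstrained (accelerated) gradient descent on the reparametrized function, including the extrapolation step. One must also check that the smoothness constant does not increase under restriction to the hyperplane. Both follow from the orthogonality of $Q$.
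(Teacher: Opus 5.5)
Your proposal is correct and follows the paper's proof: efficiency holds because the update direction sums to zero, and the rates come from plugging $\beta=n/(4\eta)$ from \cref{lem:smoothness} into the standard smooth-convex and accelerated bounds for projected gradient descent on an affine subspace. Your orthonormal reparametrization $x=\bar x+Qy$ adds no new idea; it makes explicit the reduction to unconstrained (accelerated) gradient descent that the paper takes directly from the cited theory.
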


\begin{proof}[Proof of \cref{prop:pgd-rate}]
The update direction in \cref{eq:grad-update} has zero sum because
\begin{equation}
  \sum_{i=1}^n \bigl(r_i(x)-\bar r(x)\bigr)=0.
\end{equation}
Thus the update preserves $x(N)$, and an efficient initialization
remains efficient after any number of iterations.

By \cref{lem:smoothness}, $\slcv$ is $\beta$-smooth on the efficiency
hyperplane with $\beta=n/(4\eta)$.
Projected gradient descent with step size $\alpha=1/\beta$ over an
affine subspace therefore satisfies the standard smooth convex rate
\citep{nesterov2018}
\begin{equation}
  \slcv(x_T)-\slcv(x^\star)
  \le \frac{\beta}{2T}\norm{x_0-x^\star}_2^2
  = \frac{n}{8\eta T}\norm{x_0-x^\star}_2^2.
\end{equation}
Nesterov acceleration over the same affine subspace preserves
efficiency because it only takes affine combinations of efficient
payoff vectors and projected gradient steps with zero-sum directions.
The standard accelerated smooth convex guarantee gives
\citep{nesterov2018}
\begin{equation}
  \slcv(x_T)-\slcv(x^\star)
  \le \frac{2\beta}{(T+1)^2}\norm{x_0-x^\star}_2^2
  = \frac{n}{2\eta (T+1)^2}\norm{x_0-x^\star}_2^2.
\end{equation}
\end{proof}

\begin{corollary}[Maximum-excess accuracy]
\label{cor:maximum-excess-accuracy}
Let $n\ge2$, $\cS=2^N\setminus\{\emptyset,N\}$, and $M=|\cS|$.
At fixed temperature $\eta>0$, let $x_T$ be the accelerated iterate of
\cref{prop:pgd-rate} from an efficient initial payoff $x_0$, and set
$R_\eta=\norm{x_0-x_\eta}_2$. For $T\ge1$,
\[
  0\le\etahat(x_T)-\lcv
  \le \eta\log M+\frac{nR_\eta^2}{2\eta(T+1)^2}.
\]
In particular, for any $\delta>0$, the choices
\[
  \eta=\frac{\delta}{2\log M},\qquad
  T+1\ge\frac{R_\eta\sqrt{2n\log M}}{\delta}
\]
ensure $\etahat(x_T)-\lcv\le\delta$.
\end{corollary}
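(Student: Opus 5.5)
We chain three earlier bounds: the log-sum-exp sandwich of \cref{prop:entropic-variational}, the optimality of $x_\eta$ for $\slcv$ over efficient payoffs, and the accelerated rate of \cref{prop:pgd-rate}. The lower bound $0\le\etahat(x_T)-\lcv$ is immediate. By item 1 of \cref{prop:pgd-rate}, $x_T$ is efficient, and $\lcv$ is the minimum of $\etahat$ over the efficiency hyperplane.

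For the upper bound, we write
\[
  \etahat(x_T)\le\slcv(x_T)
  =\slcv(x_\eta)+\bigl(\slcv(x_T)-\slcv(x_\eta)\bigr).
\]
The first inequality is the lower log-sum-exp bound. The bracketed optimization error is controlled by item 3 of \cref{prop:pgd-rate} with $x^\star=x_\eta$; this minimizer is unique by \cref{thm:elc-properties}. That gives at most $nR_\eta^2/(2\eta(T+1)^2)$.

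For the first term we reuse the chain from the proof of statement 2 of \cref{thm:elc-properties}. Take a least-core payoff $x^0$, which exists by \cref{lem:excess-sublevels}. Then $\slcv(x_\eta)\le\slcv(x^0)\le\etahat(x^0)+\eta\log M=\lcv+\eta\log M$. Combining the two estimates gives the displayed inequality.

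For the parameter choice, substitute $\eta=\delta/(2\log M)$, so that $\eta\log M=\delta/2$. The optimization term becomes
\[
  \frac{nR_\eta^2\cdot 2\log M}{2\delta(T+1)^2}
  =\frac{nR_\eta^2\log M}{\delta(T+1)^2}.
\]
Requiring this to be at most $\delta/2$ is equivalent to $(T+1)^2\ge 2nR_\eta^2\log M/\delta^2$, that is, $T+1\ge R_\eta\sqrt{2n\log M}/\delta$, which is exactly the stated condition. The total is then at most $\delta$.

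The main point needing care is that $R_\eta$ itself depends on $\eta$. The statement is phrased with $R_\eta$ evaluated at the chosen $\eta$, so the condition on $T$ is an implicit but valid sufficient condition and no circularity arises. The only other subtlety is confirming that item 3 of \cref{prop:pgd-rate} applies with minimizer $x_\eta$ on the affine efficiency hyperplane. That is exactly the setting in which it was stated, so I expect no real obstacle.
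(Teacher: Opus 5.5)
Your proposal is correct and follows the paper's proof step for step. You get the lower bound from efficiency of $x_T$, then chain $\etahat(x_T)\le\slcv(x_T)$, the accelerated rate with $x^\star=x_\eta$, and $\slcv(x_\eta)\le\slcv(x^{\rm LC})\le\lcv+\eta\log M$, with the parameter choice making each term at most $\delta/2$; the only nit is that efficiency of the accelerated iterate is stated in item 3 of \cref{prop:pgd-rate}, not item 1, which concerns the plain update.
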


\begin{proof}
Let $x^{\rm LC}$ be a least-core payoff, which exists by
\cref{lem:excess-sublevels}. The log-sum-exp bounds and optimality of
$x_\eta$ give
\[
  \etahat(x_T)\le\slcv(x_T),\qquad
  \slcv(x_\eta)\le\slcv(x^{\rm LC})\le\lcv+\eta\log M.
\]
Combining these inequalities with the accelerated rate in
\cref{prop:pgd-rate} proves the upper bound. Efficiency of $x_T$
gives the lower bound. The stated choices make each upper-bound term
at most $\delta/2$.
\end{proof}

Since $M=2^n-2$, this gives a sufficient iteration budget of
$O(1+nR_\eta/\delta)$. The distance factor $R_\eta$ may itself depend
on dimension and temperature; the bound is not a necessary iteration
count. It controls the maximum-excess objective error, not payoff
distance to the nucleolus, and applies at a fixed temperature rather
than bounding the total cost of the annealing schedule.

\section{Experimental protocol details}
\label{app:experiment-protocols}
\label{app:small-game-protocol}

This section supplements the quality, selector, and player-identification
experiments in \cref{sec:selector,sec:identification} with solver settings
and a check of sensitivity to initialization.

ELC starts at $x_0=(v(N)/n)\one$ and uses Nesterov acceleration with
step size $4\eta/n$. The nine temperatures are
$\eta\in\{1,0.5,0.2,0.1,0.05,0.02,0.01,0.005,0.002\}$.
Each stage starts from the preceding payoff, with momentum reset, and
stops at a projected-gradient infinity norm of $10^{-8}$ or
$10{,}000$ iterations. An iteration-limited payoff initializes the next
stage.

\paragraph{Numerical stability.}
We evaluate the objective and Gibbs weights using stabilized log-sum-exp
and softmax routines. For $z_S=e_S(x)/\eta$ and $m=\max_S z_S$, these compute
\[
  \log\sum_S e^{z_S}=m+\log\sum_S e^{z_S-m},
  \qquad
  p_S=\frac{e^{z_S-m}}{\sum_T e^{z_T-m}}.
\]
The exponentials therefore have nonpositive arguments, avoiding overflow
from directly exponentiating large scaled excesses. The implementation
subtracts the constant $\eta\log M$ from the objective in
\cref{eq:smoothed-excess}; this leaves its gradient and minimizer unchanged.

The initial least-core LP in runtime comparisons uses HiGHS interior
point with crossover and primal and dual feasibility tolerances of
$10^{-9}$. Minimum-norm selection uses SLSQP, with constraint generation
for larger coalition families; its numerical relaxation is specified in
\cref{app:full-constraint-timing}.

Timing repetitions randomize method order within each game. Warm-up
computations are excluded, CUDA is synchronized at timing boundaries,
and numerical-library threads are limited to one per worker.
Standard errors use the sample standard deviation divided by the square
root of the number of independent games or splits.

\paragraph{Sensitivity to initialization.}
\label{app:initialization-diagnostic}
For each of 60 random games, we run the same annealing path from five
initializations: the uniform payoff, LP + min-norm, a TMC Shapley
estimate, and two random efficient payoffs. Endpoints agree to about
$10^{-12}$ within each game (\cref{tab:initialization}), indicating
negligible initialization sensitivity under this protocol. The table
also records how often the iteration limit is reached.

\input{tables/initialization}

\section{Annealing trajectories}
\label{app:selector-supplement}

We examine how the computed payoff approaches the nucleolus during
annealing in the random and UCI games of \cref{sec:selector}.
Median distance decreases at every measured temperature transition
in all three settings (\cref{fig:selector-path}). Individual paths
follow the same trend except in four of 60 random games, where the
largest upward step is $0.00279$. All displayed medical and chemical
paths decrease monotonically up to a $10^{-10}$ numerical tolerance.

\begin{figure}[!tb]
\centering
\includegraphics[width=.52\linewidth]{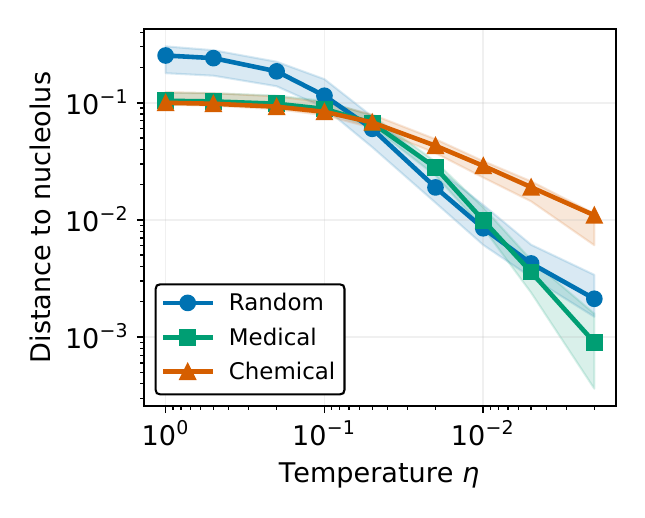}
\caption{Annealing toward the nucleolus: median and interquartile
range for 60 random games, ten medical splits, and nine chemical splits.
Iteration-limited paths are retained; one chemical split lacks a valid
reference and is excluded. Temperature decreases left to right.}
\label{fig:selector-path}
\end{figure}

\section{Computational accuracy and scaling}
\label{app:enumerated-timing-results}

We detail the accuracy and timing comparisons in \cref{sec:selector},
then examine GPU acceleration and coalition sampling at larger sizes.

\subsection{Accuracy and runtime relative to the nucleolus}
\label{app:speed-claims}

The comparison in \cref{tab:nucleolus-speed} measures how cheaply ELC
approximates the nucleolus. It uses 20 random games per size $n=6,7,8$
and ten splits each of medical, chemical, and house, with three timing
repetitions per game. Distances and gaps are medians of within-game
repetition medians; times average all finished repetitions.

ELC time covers all nine annealing stages; nucleolus time covers the
initial least-core LP and subsequent lexicographic optimization.
Both exclude shared game construction and independent accuracy
evaluation. This compares attained accuracy and runtime, rather than
time to a common distance target. The largest absolute least-core gap
of the nucleolus references is $3.78\times10^{-15}$.

House distances use saved references for all ten splits, but its
nucleolus runtime uses only 15 completed repetitions; 15 worker
timeouts under a 3600s cap are excluded. One chemical split is excluded
from min-norm distances because its reference solve failed.
ELC trajectories reaching a stage iteration limit remain included.

\subsection{CPU/GPU scaling and sampled baselines}
\label{app:full-constraint-timing}

\paragraph{Setup.}
Table~\ref{tab:gpu-speed} uses the hardware and solvers in
\cref{sec:selector,app:experiment-protocols}, with identical float64
updates for CPU/GPU ELC and a 600s solver limit.
For $n\in\{8,10,\ldots,24\}$, each game has $n-2$ useful players,
two distractors, $\tau=n-2$, $a_j=0.1(n-2)$, and $b=h_j=2(n-2)$.
Useful effects are uniform in $[0.8,1.2]$, rescaled to sum to $n-2$.
We use five games and three timing repetitions per condition.
The 10\%/50\% families are nested uniform samples with at least $4n$
rows, shared across methods and repetitions.
For each constraint family $\mathcal A$, min-norm selection uses
$e_S(x)\le t_{\mathcal A}+10^{-9}q_{\mathcal A}$, where
$q_{\mathcal A}=\max(1,|v(N)|,\max_{S\in\mathcal A}|v(S)|)$ and
$t_{\mathcal A}$ is the maximum excess of the verified initial LP allocation.

\paragraph{Reading the table.}
Time averages all 15 runs: measured solver times for successes and
numerical failures, and 600s for timeouts ($+$ denotes a lower bound).
It includes the initial LP and internal verification for min-norm,
but excludes data preparation, oracle evaluation, transfers, warm-up,
and independent final evaluation. Num.\ err.\ counts numerical solver failures
or rejected numerical solutions. Gap and AP average successful runs only;
a dash means none succeeded.
Gap is $\max(0,\etahat(x)-\epsilon^\star_{\rm full})/q_{\mathcal S}$,
evaluated on the full family; AP follows \cref{sec:identification}.
Successful subsets may differ across methods: both sampled min-norm
conditions at $n=20$ and the 10\% condition at $n=22$ represent only one
successful game, repeated three times.

\paragraph{Results.}
GPU ELC is $\ScalingSpeedupTwenty{}\times$ and
$\ScalingSpeedupTwentyTwo{}\times$ faster than CPU ELC at $n=20,22$,
respectively, where both complete all runs. At $n=24$, GPU ELC finishes
all runs while CPU ELC always times out. These comparisons use attained
accuracies, not a common accuracy target.
Sampled LP is faster than full LP but has larger full-family gaps;
adding min-norm selection often loses the runtime advantage.
For $n\in\{20,22,24\}$ across both sampling budgets, the initial LP
succeeds in all 90 sampled min-norm runs. All 27 runs containing the
clean coalition $U$ (all useful players and no distractors) succeed;
without $U$, 60 runs time out during SLSQP-based selection and three
report SLSQP numerical failures. These are nine and 21 sampled game
settings, respectively, each repeated three times. This association
localizes the failures to the selection stage but does not establish
the numerical cause or an intrinsic disadvantage of the min-norm selector.

\begingroup
\small\setlength{\tabcolsep}{3pt}
\input{tables/speed_scaling_detail}
\endgroup

\section{Generalized contextual-distractor experiments}
\label{app:sampling-protocol}

We supplement the player-identification results in \cref{sec:identification}
with game generation and evaluation details. Additional tables and figures
separate equal-harm games and show how identification changes with the
context threshold.

\subsection{Games and sampling}
\label{app:generalized-game-protocol}

The games in \cref{tab:strength} follow \cref{eq:freerider-game}
with $(n,|U|,|D|)=(10,6,2)$ and
$(14,8,3)$, leaving two and three neutral players, respectively.
Useful effects are drawn
independently from $[0.8,1.2]$ and rescaled to sum to $|U|$.

The main-text experiments use heterogeneous harm: equally spaced
$h_j/b$ from $0.25$ to $0.75$, permuted across distractors.
We additionally test the homogeneous special case in which every
$h_j=b$. In both cases $a_j=0.1|U|$ and $b=2|U|$.
The thresholds are $3,5,6$ for $n=10$ and $4,6,8$ for $n=14$.
Ten game seeds per size and threshold give 60 games for the main
setting and 60 for the homogeneous special case.

Each game uses all $M=2^n-2$ proper coalitions or a uniform sample
without replacement of $\lfloor\rho M\rfloor$ rows for
$\rho\in\{0.1,0.5\}$. Five sample seeds per game give 1,200 sampled
conditions. The two budgets are nested prefixes of the same random
permutation. No targeted rows or minimum-row adjustment are used.
LP, LP + min-norm, ELC, and the Shapley estimator receive identical sampled
rows and values. ELC uses the shared annealing protocol in
\cref{app:small-game-protocol}.

The equal-count variant in \cref{tab:strength-balanced} instead uses
$(n,|U|,|D|)=(10,4,4)$ and $(14,6,6)$, with two neutrals and thresholds
$\{2,3,4\}$ and $\{3,5,6\}$, respectively. It uses the same two harm
settings, sample construction, and solvers. This changes both group
sizes and the contexts in which harm occurs, so differences between
these grids do not isolate a causal effect of group size.

For each grid, all methods return 120 full-game results. Of 1,200
sampled conditions, 32 LP + min-norm solves fail numerical acceptance
and are excluded; the other methods return all 1,200 outputs.
In the equal-count grid, 31 failures occur at 10\% and one at 50\%.
ELC returns finite iterates even when a stage reaches its iteration limit;
in the original grid, all full endpoints and 1,199 sampled endpoints
meet the final-stage gradient tolerance.

\subsection{Evaluation metrics and aggregation}
\label{app:strength}

We measure group identification, negative-credit assignment, and
within-group ordering separately. AP$^+$ and AP$^-$ rank the useful
and distractor groups by $x_i$ and
$-x_i$, using grouped tied scores. Negative credit means $x_i<-10^{-7}$. For both means and SE, sample seeds are averaged
within each game first; games are the independent units. Failed solves
are excluded, with availability specified in \cref{app:generalized-game-protocol}.

\paragraph{Within-group strength order.}
For useful pairs, we compare $m_i-m_k$ with $x_i-x_k$; for distractor
pairs, we compare $(h_j-a_j)-(h_\ell-a_\ell)$ with $-x_j+x_\ell$.
Only pairs with distinct generating strengths enter the correct, tied,
and reversed fractions; order accuracy is the correct fraction.
The tie tolerance is $10^{-7}\max(1,\|s\|_\infty)$ for a group's
strength vector $s$, applied to both target and payoff differences.
Equal-harm distractors have no target order and enter only identification
and sign metrics.

For the full-coalition game in \cref{tab:freerider}, symmetry and
additive translation explain the common $+7.5$ useful-player offset:
subtracting the additive useful contributions leaves a game symmetric
in useful players, so ELC assigns them a common interaction share.
Subtracting this share would violate efficiency. Asymmetric interactions
need not produce a common offset.

\subsection{Shapley estimation on matched coalition samples}
\label{app:sampled-shapley}

We compute sampled Shapley from the identical uniform-without-replacement
coalition rows used by LP, LP + min-norm, and ELC. This controls the available
coalition information across methods. The comparison concerns this
matched-row estimator rather than alternative Shapley sampling schemes.

Let $\mathcal B\subset\mathcal S$ contain
$m$ of the $M=2^n-2$ proper coalitions. For $k=|S|$, define
\[
 c_i(S)=\begin{cases}
 1/[k\binom nk],&i\in S,\\
 -1/[(n-k)\binom nk],&i\notin S.
 \end{cases}
\]
Collecting the coefficients of each coalition value in the marginal
contribution definition gives the full Shapley formula. Since each row
has inclusion probability $m/M$, its inverse-probability estimator is
\[
 \widehat\phi_i=\frac{v(N)-v(\emptyset)}n+
 \frac M m\sum_{S\in\mathcal B}c_i(S)\,[v(S)-v(\emptyset)].
\]
It is unbiased under the specified sampling design. Also,
$\sum_i c_i(S)=0$ for every proper row, so every output sums to
$v(N)-v(\emptyset)$; here $v(\emptyset)=0$. Full enumeration recovers
the exact value. Inputs are only the sampled coalition rows and values
and the two endpoint values.

\subsection{Identification across game settings}
\label{app:threshold-results}

We test how the identification patterns change with group sizes,
equal distractor harm, and the context threshold.
\Cref{tab:strength-homogeneous,tab:strength-balanced-homogeneous} set
$h_j=b$ for every distractor in the original and equal-count grids.
Full ELC and LP + min-norm identify both groups perfectly in both grids,
but sampling lowers AP$^-$ while retaining high AP$^+$. ELC also preserves
useful-player ordering under the full family, whereas min-norm selection
partly equalizes their payoffs.

\begin{table}[t]
\centering\small\setlength{\tabcolsep}{3pt}
\caption{Equal-harm counterpart to \cref{tab:strength}: $h_j=b$ for all
distractors, with 60 games and the same metrics and aggregation.
Higher is better; bold marks the best mean within each budget, including
ties. Harm order is undefined and shown as a dash.}
\label{tab:strength-homogeneous}
\input{tables/strength_homogeneous}
\end{table}

\paragraph{Equal useful and distractor counts.}
\Cref{tab:strength-balanced} checks whether the sampling effect persists
when the two groups are equally numerous. AP$^-$ still falls for Shapley,
ELC, and LP + min-norm; for the latter two, AP$^+$ remains near one.
The game parameters and the limits of comparison with the original grid
are given in \cref{app:generalized-game-protocol}.

\begin{table}[t]
\centering\small\setlength{\tabcolsep}{3pt}
\caption{Equal useful and distractor counts with varying distractor harm.
As in \cref{tab:strength}, $h_j/b$ ranges from $0.25$ to $0.75$ across
distractors, but group sizes are now equal:
$(n,|U|,|D|)=(10,4,4)$ and $(14,6,6)$, with two neutrals.
Means use 60 games; metrics, aggregation, and boldface follow
\cref{tab:strength}. Game generation and exclusion rules are in
\cref{app:generalized-game-protocol}.}
\label{tab:strength-balanced}
\input{tables/strength_balanced}
\end{table}

\begin{table}[t]
\centering\small\setlength{\tabcolsep}{3pt}
\caption{Equal useful and distractor counts with identical distractor harm.
Group sizes match \cref{tab:strength-balanced}, but every distractor has
$h_j=b$, so any one distractor removes the entire bonus.
Means use 60 games with the same metrics, aggregation, and boldface.
Harm order is undefined because all distractors are equally harmful
and is shown as a dash.}
\label{tab:strength-balanced-homogeneous}
\input{tables/strength_balanced_homogeneous}
\end{table}

The threshold comparisons in \cref{fig:sampling-n10,fig:sampling-n14}
show that distractor identification generally worsens as harmful
contexts become rarer, while useful players remain better identified.
Larger samples usually improve recovery but still fall short of full
ELC and LP + min-norm. Full Shapley also loses distractor identification
at higher thresholds despite preserving useful-player rankings.

\begin{figure}[!t]
\centering
\includegraphics[width=.95\linewidth]{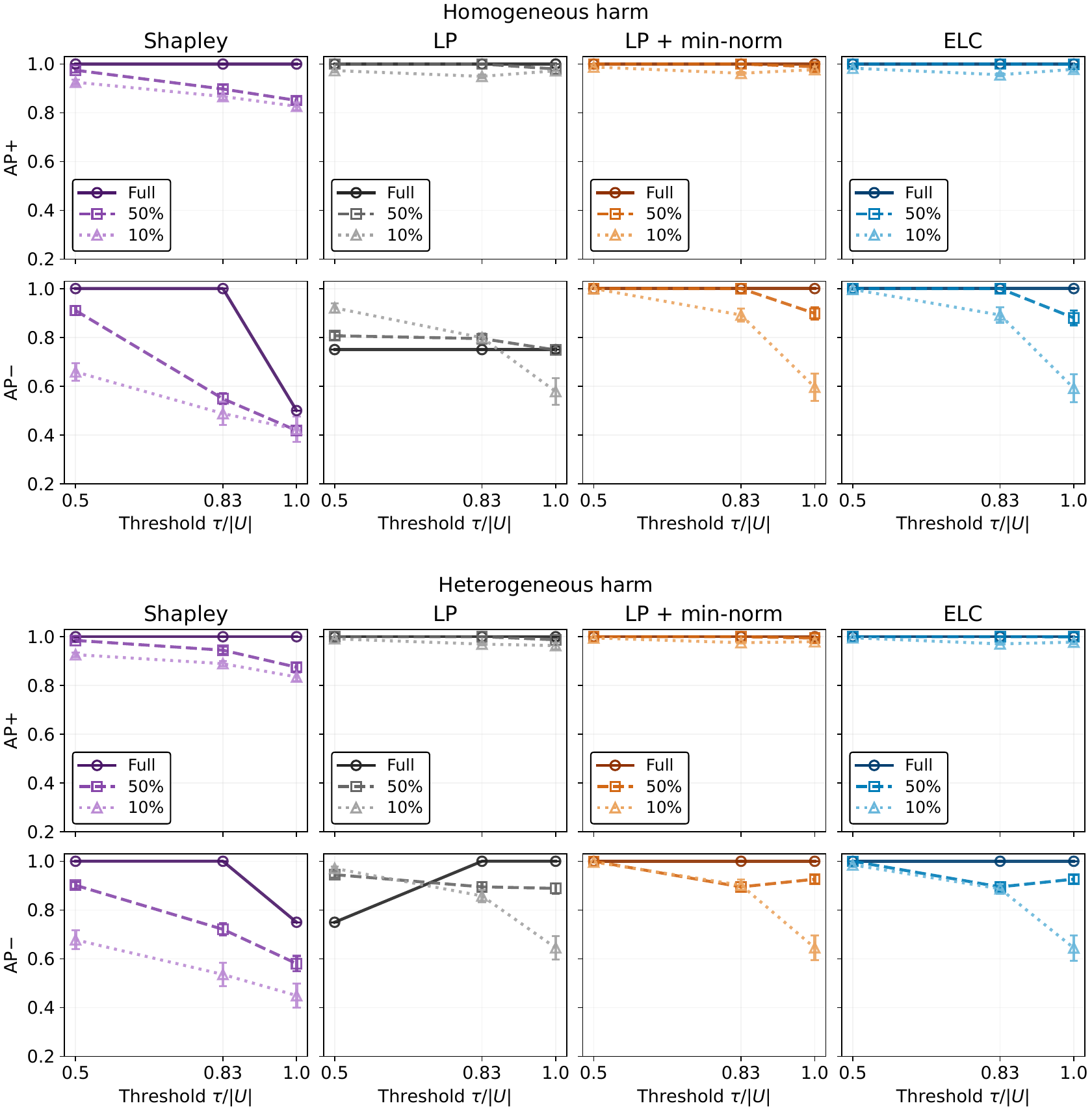}
\caption{Threshold comparisons at $n=10$, with homogeneous harm above
and heterogeneous harm below. Columns compare methods; each pair of
rows shows AP$^+$ and AP$^-$. Full, 50\%, and 10\% coalition families
use dark-to-light shades and distinct line styles and markers.
Sampled Shapley uses the estimator in \cref{app:sampled-shapley}.
Means and SEs use ten games after
averaging samples within each game.}
\label{fig:sampling-n10}
\end{figure}

\begin{figure}[!t]
\centering
\includegraphics[width=.95\linewidth]{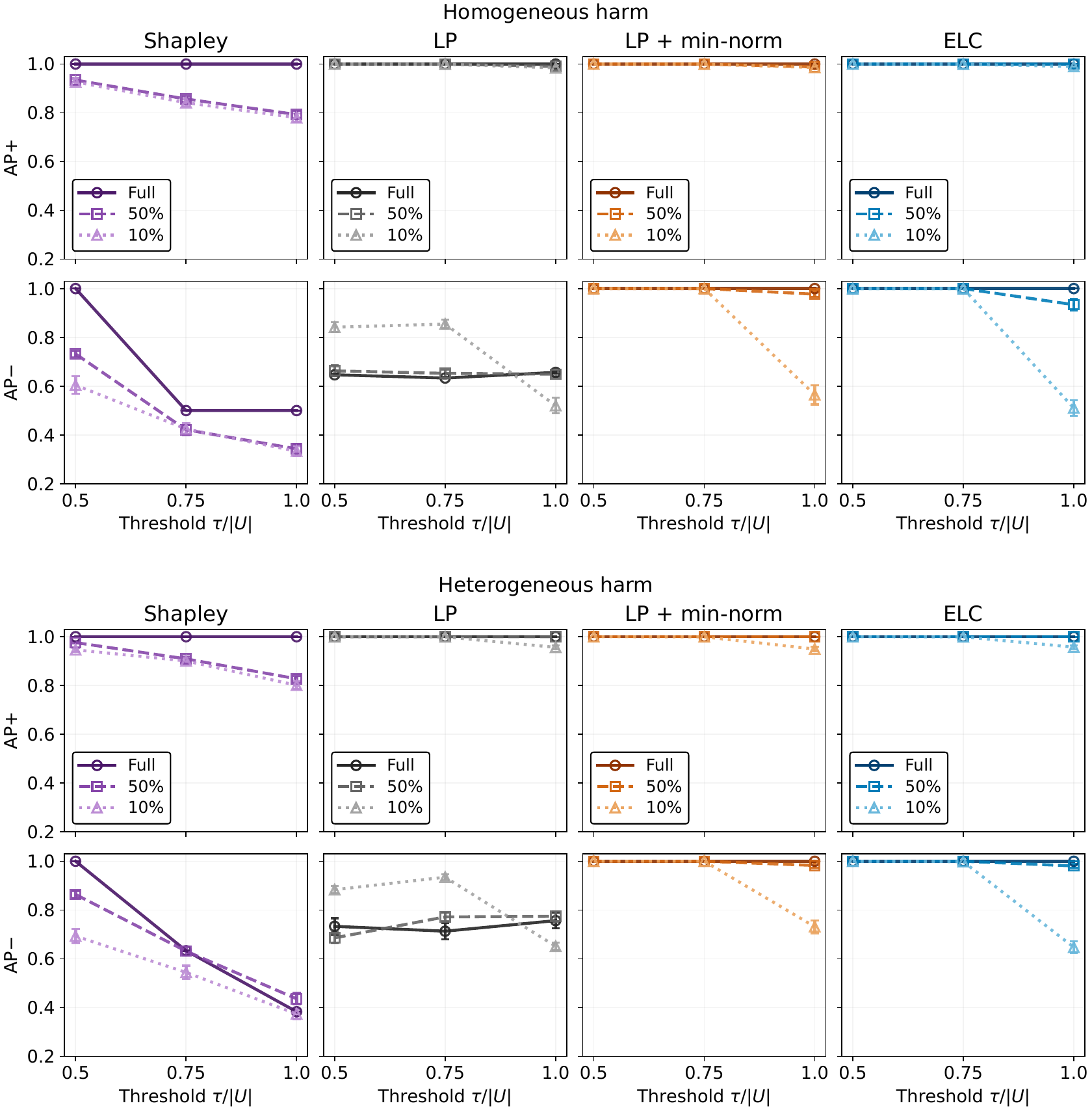}
\caption{Threshold comparisons at $n=14$, with the same layout,
aggregation, and line conventions as \cref{fig:sampling-n10}.}
\label{fig:sampling-n14}
\end{figure}

\section{Downstream attribution checks}
\label{sec:yan-protocol}

These experiments assess whether attribution rankings help with
training-point removal, feature removal, and mislabeled-point detection.
We expand the training-data valuation results in \cref{sec:data-valuation}
and then examine the two additional tasks.

\subsection{Training-point removal curves}
\label{app:data-removal-details}

The Gaussian and Dogfish curves show how prediction accuracy changes
when training points are removed in attribution order
(\cref{fig:gaussian-removal-all,fig:dogfish-removal-all}).
ELC's mean normalized AUC improves on both baselines across all budgets
and removal directions, except against min-norm on Dogfish best-first
removal at 10K queries. The Gaussian gains are larger; several of the
smaller gains have confidence intervals that include zero.

For the paired AUC differences in \cref{tab:yan-data-paired-auc}, we
bootstrap repetition-level differences 10,000 times to obtain 95\%
confidence intervals. The curve bands instead show sample standard
deviations of accuracy, using 20 Gaussian and five Dogfish repetitions.
The two displays therefore summarize different kinds of uncertainty.

\begin{figure}[!t]
\centering
\includegraphics[width=.95\linewidth,height=.76\textheight,keepaspectratio]{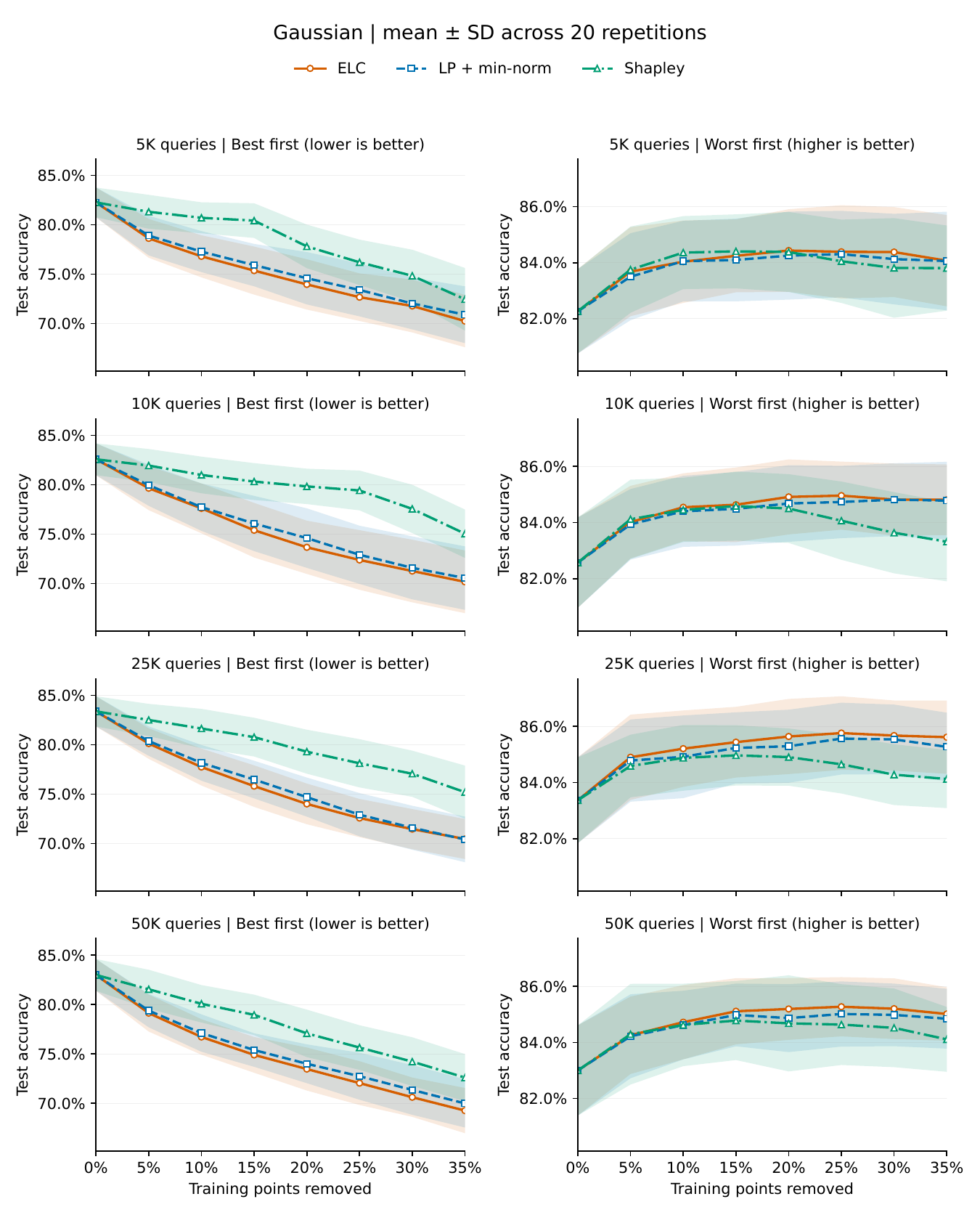}
\caption{Gaussian training-point removal at all four query budgets.
Lines are means over 20 repetitions and bands are $\pm1$ sample standard
deviation. Best-first removal favors lower accuracy; worst-first removal
favors higher accuracy. Vertical scales are shared across budgets within
each removal direction.}
\label{fig:gaussian-removal-all}
\end{figure}

\begin{figure}[!t]
\centering
\includegraphics[width=.95\linewidth,height=.76\textheight,keepaspectratio]{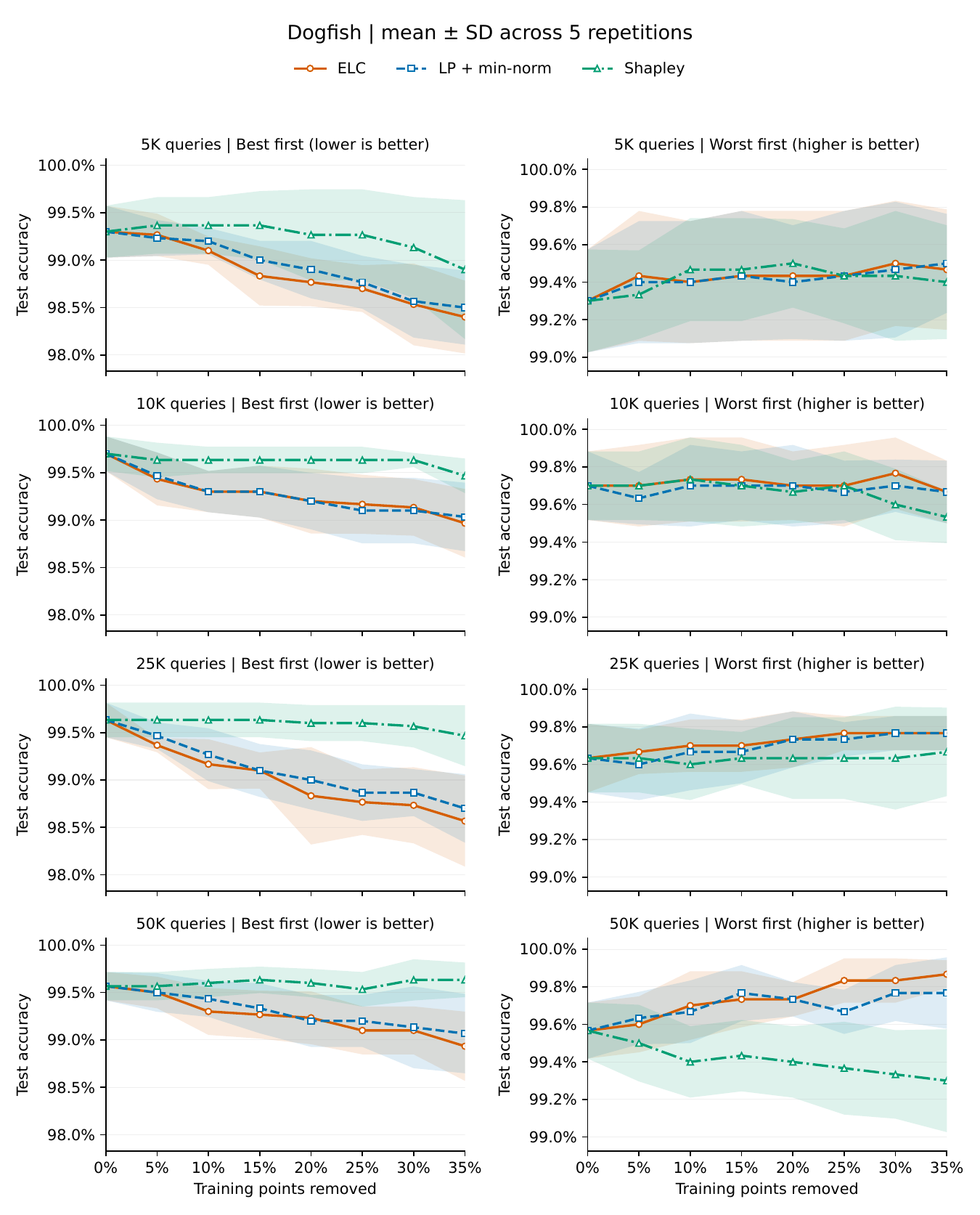}
\caption{Dogfish training-point removal at all four query budgets.
Lines are means over five repetitions and bands are $\pm1$ sample standard
deviation. Layout and scale conventions match \cref{fig:gaussian-removal-all}.}
\label{fig:dogfish-removal-all}
\end{figure}

\subsection{Feature removal on UCI datasets}
\label{app:uci-feature-removal}

This experiment tests feature rankings by removing input features and
refitting a logistic-regression classifier. Coalition values are
held-out accuracies on ten fixed 80/20 stratified splits, with matched
preprocessing across methods. Removal ranges from zero to $n-1$
features; \cref{tab:uci-feature-removal} averages the ELC-minus-min-norm
accuracy difference over these steps and then across splits.

Mean accuracy differences favor ELC in both removal directions on
Chemical. Medical shows almost no mean difference, while the means on
House favor ELC for lowest-first removal and min-norm for highest-first removal. These tasks evaluate predictive
utility rather than recovery of labeled contextual distractors.

\begin{table}[t]
\centering
\caption{Feature-removal accuracy differences, ELC minus (LP + min-norm),
in percentage points: each split contributes its mean over removal
steps, followed by mean $\pm$ SE over ten splits. Positive favors ELC
for lowest-first removal; negative favors ELC for highest-first removal.}
\label{tab:uci-feature-removal}
\input{tables/uci_removal}
\end{table}

\subsection{Label correction on Enron}
\label{app:enron-label-correction}
We adopt the Enron label-correction task of \citet{yan2021} to test
whether low attribution scores identify mislabeled training examples.
We use 1,000 Enron emails with character
$n$-gram count features and Multinomial Naive Bayes. The 489 training
points are players; 20\% of training labels are randomly flipped.
Each of five repetitions uses 5,000 coalition queries, with Shapley
estimated by truncated Monte Carlo.

We restore labels in increasing payoff order. Correction AUC integrates
accuracy over 0--35\% restored points and divides by 0.35.
Mislabel AP ranks $-x_i$ against the flipped-label indicator.
Flip recall is measured among the lowest-ranked points, selecting as
many points as there are flipped labels.

ELC has the highest mean correction AUC, with a small margin over
LP + min-norm and Shapley (\cref{tab:enron-summary}). Shapley is strongest
at identifying flipped labels, while ELC and min-norm have similar
correction curves (\cref{fig:yan-enron-correction}). It is worth noting that mislabeled samples are not necessarily contextual distractors. Contextual distractors are those becoming harmful at some context and otherwise harmless, while mislabeled samples generally do not have such inconsistency. In \citet{yan2021}, the performance of the Shapley value was not reported. 

\input{tables/enron_summary}

\begin{figure}[!t]
\centering
\includegraphics[width=.7\linewidth]{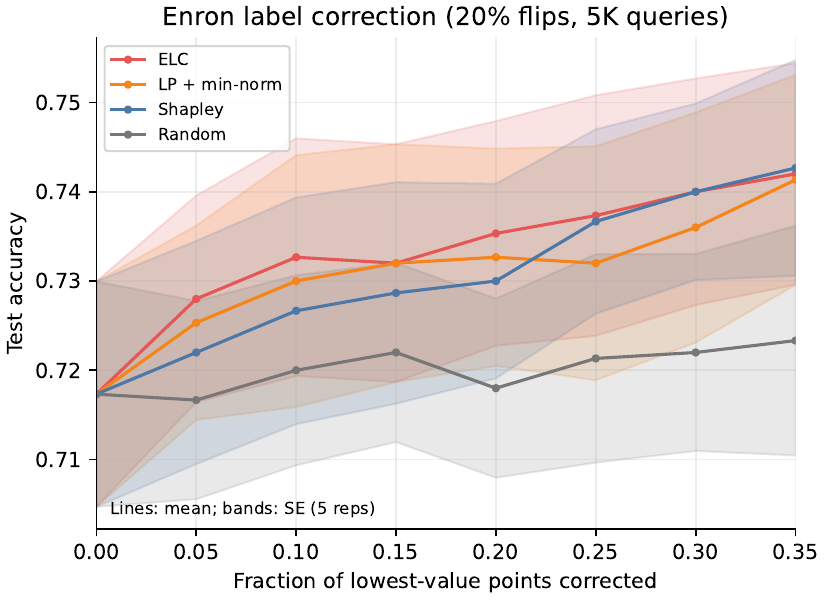}
\caption{Enron label correction after restoring an increasing fraction
of the lowest-valued training points.
Curves are means over five repetitions and bands are mean \(\pm\) SE.}
\label{fig:yan-enron-correction}
\end{figure}

\section{Reference experiment: image-marker attribution}
\label{app:marker}

The image-marker task of \citet{nagarajan2026} tests attribution to a
visually identifiable contextual distractor. A ResNet-50 classifier
predicts cat, dog, or bird; a red marker is added to all dog images and
30\% of cat images during dataset construction. We evaluate 100
correctly classified cats containing the marker, using one sampling seed.
Images are not selected by the sign of the marker's contribution.

Each $224\times224$ image is divided into an $8\times8$ grid of player
patches. The group $D$ comprises patches overlapping the marker.
Coalition values are target-class logits, with absent patches replaced
by black and the empty-image logit subtracted. Removing $D$ therefore
removes entire patches, including pixels outside the marker.

Shapley uses 100 permutations. LP, LP + min-norm, and ELC share 6,400
sampled coalitions with independent patch-inclusion probability $0.5$,
plus the empty and grand coalitions. LP + min-norm uses CVXPY to solve
an LP followed by a quadratic program. ELC uses nine annealing stages
with at most 10,000 updates each. All 100 ELC runs reach the iteration
limit, with final projected-gradient residuals near $10^{-6}$ versus
a $10^{-8}$ threshold. Three min-norm solves fail, leaving 97 images.

The least-core methods assign negative marker credit more than twice
as often as Shapley, including among images where adding the marker
patches reduces the target logit
(\cref{tab:marker-reference,fig:marker-comparison}). ELC and the two
LP-based methods give similar results. This supports the contrast
between least-core and Shapley attribution in this task, without a
separate advantage for ELC's selector.

\begin{table}[t]
\centering
\caption{Marker attribution with one sampling seed. Credit is the sum
$x(D)$ in logit units; negative rate is the fraction with $x(D)<0$.
The last column restricts this rate to images where adding $D$ to its
complement lowers the target logit: 74 images, or 72 among successful
LP + min-norm runs.}
\label{tab:marker-reference}
\small
\input{tables/marker_summary}
\end{table}

\begin{figure}[!t]
\centering
\includegraphics[width=0.65\linewidth]{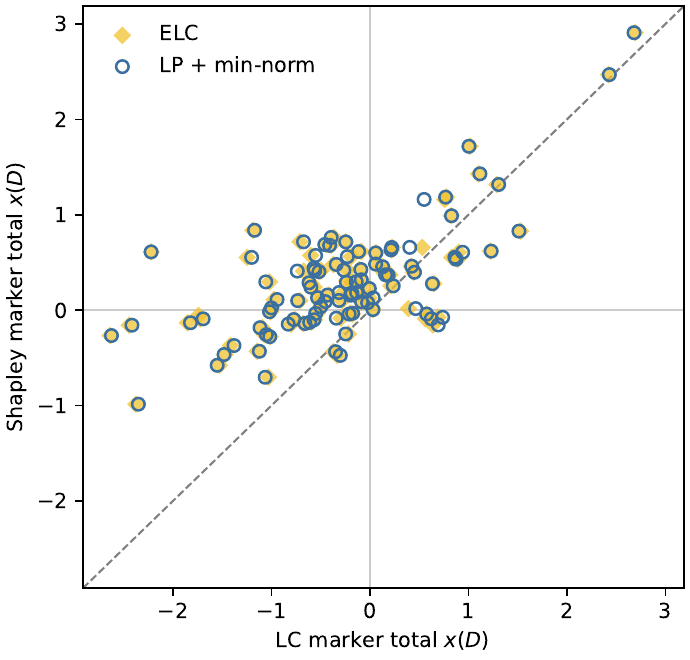}
\caption{Marker attribution: Shapley versus least-core marker credit
$x(D)$, in logit units. Each point represents one image: the horizontal
coordinate is ELC (yellow diamonds, 100 images) or LP + min-norm
(blue open circles, 97 successful solves), and
the vertical coordinate is the corresponding Shapley estimate.
The dashed line denotes equal credit; gray lines mark zero.
The two least-core series largely overlap.}
\label{fig:marker-comparison}
\end{figure}

%% file: tables/initialization.tex
\begin{table}[t]
\centering
\caption{Endpoint variability across five initializations under nine-stage annealing. Diameter is the maximum pairwise endpoint $\ell_2$ distance; hard gap is maximized over the five starts. Values use 20 random games per size. The last two columns count games reaching an iteration limit at any stage or at the final stage, respectively.}
\label{tab:initialization}
\normalsize\setlength{\tabcolsep}{4pt}
\begin{tabular}{rrrrrrr}
\toprule
\(n\) & Games & \multicolumn{2}{c}{Diameter (\(\times10^{-12}\))} & Max hard gap (\(\times10^{-4}\)) & \multicolumn{2}{c}{Capped games} \\
\cmidrule(lr){3-4}\cmidrule(lr){6-7}
 & & Mean \(\pm\) SE & Max & Mean \(\pm\) SE & Any stage & Final stage \\
\midrule
6 & 20 & \(1.269 \pm 0.350\) & 5.594 & \(4.605 \pm 0.998\) & 4 & 0 \\
7 & 20 & \(1.007 \pm 0.382\) & 5.776 & \(7.069 \pm 1.059\) & 4 & 2 \\
8 & 20 & \(2.275 \pm 0.451\) & 5.439 & \(4.836 \pm 1.166\) & 11 & 4 \\
\bottomrule
\end{tabular}
\end{table}

%% file: tables/speed_scaling_detail.tex
\begin{longtable}{rlrrrrrr}
\caption{Computational scaling and attribution on contextual-distractor games. Time averages all 15 runs; Gap and AP average successful runs only. $+$ marks a time lower bound; -- in Gap or AP indicates no successful runs. Settings and aggregation are described in \cref{app:full-constraint-timing}.}\label{tab:scaling-extension}\label{tab:gpu-speed}\\
\toprule
$n$ & Method & Time (s) & Gap & AP$^+$ & AP$^-$ & Timeout/15 & Num.\ err./15 \\
\midrule
\endfirsthead
\multicolumn{8}{c}{Table \thetable\ (continued)} \\
\toprule
$n$ & Method & Time (s) & Gap & AP$^+$ & AP$^-$ & Timeout/15 & Num.\ err./15 \\
\midrule
\endhead
\midrule\multicolumn{8}{r}{Continued on next page} \\
\endfoot
\bottomrule
\endlastfoot
8 & LP & 0.003 & 0 & 1.000 & 1.000 & 0/15 & 0/15 \\*
8 & LP + min-norm & 0.005 & 1e-09 & 1.000 & 1.000 & 0/15 & 0/15 \\*
8 & 50\% LP & 0.003 & 0.2 & 1.000 & 1.000 & 0/15 & 0/15 \\*
8 & 50\% LP + min-norm & 0.004 & 0.2 & 1.000 & 1.000 & 0/15 & 0/15 \\*
8 & 10\% LP & 0.003 & 0.32 & 1.000 & 1.000 & 0/15 & 0/15 \\*
8 & 10\% LP + min-norm & 0.004 & 0.278 & 1.000 & 1.000 & 0/15 & 6/15 \\*
8 & ELC CPU & 0.009 & 2.29e-14 & 1.000 & 1.000 & 0/15 & 0/15 \\*
8 & ELC GPU & 0.029 & 2.12e-14 & 1.000 & 1.000 & 0/15 & 0/15 \\
\midrule
10 & LP & 0.007 & 0 & 1.000 & 1.000 & 0/15 & 0/15 \\*
10 & LP + min-norm & 0.008 & 1e-09 & 1.000 & 1.000 & 0/15 & 0/15 \\*
10 & 50\% LP & 0.006 & 0.0792 & 0.997 & 0.967 & 0/15 & 0/15 \\*
10 & 50\% LP + min-norm & 0.007 & 0.0792 & 1.000 & 1.000 & 0/15 & 0/15 \\*
10 & 10\% LP & 0.003 & 0.28 & 0.995 & 0.917 & 0/15 & 0/15 \\*
10 & 10\% LP + min-norm & 0.004 & 0.28 & 0.995 & 0.917 & 0/15 & 0/15 \\*
10 & ELC CPU & 0.013 & 2.61e-13 & 1.000 & 1.000 & 0/15 & 0/15 \\*
10 & ELC GPU & 0.037 & 2.82e-13 & 1.000 & 1.000 & 0/15 & 0/15 \\
\midrule
12 & LP & 0.024 & 0 & 0.957 & 0.669 & 0/15 & 0/15 \\*
12 & LP + min-norm & 0.026 & 1e-09 & 1.000 & 1.000 & 0/15 & 0/15 \\*
12 & 50\% LP & 0.017 & 0.267 & 0.909 & 0.316 & 0/15 & 0/15 \\*
12 & 50\% LP + min-norm & 0.023 & 0.267 & 0.918 & 0.383 & 0/15 & 0/15 \\*
12 & 10\% LP & 0.006 & 0.276 & 0.918 & 0.396 & 0/15 & 0/15 \\*
12 & 10\% LP + min-norm & 0.008 & 0.276 & 0.918 & 0.383 & 0/15 & 0/15 \\*
12 & ELC CPU & 0.030 & 4.95e-13 & 1.000 & 1.000 & 0/15 & 0/15 \\*
12 & ELC GPU & 0.059 & 4.4e-13 & 1.000 & 1.000 & 0/15 & 0/15 \\
\midrule
14 & LP & 0.116 & 9.87e-18 & 0.866 & 0.578 & 0/15 & 0/15 \\*
14 & LP + min-norm & 0.122 & 1e-09 & 1.000 & 1.000 & 0/15 & 0/15 \\*
14 & 50\% LP & 0.073 & 0.204 & 0.808 & 0.390 & 0/15 & 0/15 \\*
14 & 50\% LP + min-norm & 0.234 & 0.204 & 0.836 & 0.474 & 0/15 & 0/15 \\*
14 & 10\% LP & 0.015 & 0.333 & 0.719 & 0.112 & 0/15 & 0/15 \\*
14 & 10\% LP + min-norm & 0.023 & 0.333 & 0.719 & 0.112 & 0/15 & 0/15 \\*
14 & ELC CPU & 0.088 & 6.46e-15 & 1.000 & 1.000 & 0/15 & 0/15 \\*
14 & ELC GPU & 0.164 & 0 & 1.000 & 1.000 & 0/15 & 0/15 \\
\midrule
16 & LP & 0.527 & 2.54e-17 & 0.870 & 0.567 & 0/15 & 0/15 \\*
16 & LP + min-norm & 0.548 & 1e-09 & 1.000 & 1.000 & 0/15 & 0/15 \\*
16 & 50\% LP & 0.388 & 0.07 & 0.868 & 0.559 & 0/15 & 0/15 \\*
16 & 50\% LP + min-norm & 12.181 & 0.07 & 0.946 & 0.819 & 0/15 & 0/15 \\*
16 & 10\% LP & 0.059 & 0.333 & 0.731 & 0.096 & 0/15 & 0/15 \\*
16 & 10\% LP + min-norm & 0.474 & 0.333 & 0.731 & 0.096 & 0/15 & 0/15 \\*
16 & ELC CPU & 0.408 & 1.45e-15 & 1.000 & 1.000 & 0/15 & 0/15 \\*
16 & ELC GPU & 0.633 & 2.96e-15 & 1.000 & 1.000 & 0/15 & 0/15 \\
\midrule
18 & LP & 2.405 & 0 & 0.879 & 0.559 & 0/15 & 0/15 \\*
18 & LP + min-norm & 2.516 & 1e-09 & 1.000 & 1.000 & 0/15 & 0/15 \\*
18 & 50\% LP & 1.654 & 0.136 & 0.852 & 0.458 & 0/15 & 0/15 \\*
18 & 50\% LP + min-norm & 127.094$^{+}$ & 0.0868 & 0.938 & 0.771 & 3/15 & 0/15 \\*
18 & 10\% LP & 0.267 & 0.333 & 0.751 & 0.095 & 0/15 & 0/15 \\*
18 & 10\% LP + min-norm & 29.617 & 0.333 & 0.751 & 0.095 & 0/15 & 0/15 \\*
18 & ELC CPU & 2.632 & 1.67e-14 & 1.000 & 1.000 & 0/15 & 0/15 \\*
18 & ELC GPU & 2.470 & 0 & 1.000 & 1.000 & 0/15 & 0/15 \\
\midrule
20 & LP & 12.645 & 0 & 0.886 & 0.553 & 0/15 & 0/15 \\*
20 & LP + min-norm & 13.277 & 1e-09 & 1.000 & 1.000 & 0/15 & 0/15 \\*
20 & 50\% LP & 7.134 & 0.267 & 0.791 & 0.176 & 0/15 & 0/15 \\*
20 & 50\% LP + min-norm & 481.904$^{+}$ & 1e-09 & 1.000 & 1.000 & 12/15 & 0/15 \\*
20 & 10\% LP & 1.306 & 0.271 & 0.814 & 0.261 & 0/15 & 0/15 \\*
20 & 10\% LP + min-norm & 432.288$^{+}$ & 0.0196 & 1.000 & 1.000 & 9/15 & 3/15 \\*
20 & ELC CPU & 15.129 & 2.36e-14 & 1.000 & 1.000 & 0/15 & 0/15 \\*
20 & ELC GPU & 9.769 & 4.8e-14 & 1.000 & 1.000 & 0/15 & 0/15 \\
\midrule
22 & LP & 62.547 & 0 & 0.893 & 0.548 & 0/15 & 0/15 \\*
22 & LP + min-norm & 65.457 & 1e-09 & 1.000 & 1.000 & 0/15 & 0/15 \\*
22 & 50\% LP & 44.586 & 0.138 & 0.891 & 0.537 & 0/15 & 0/15 \\*
22 & 50\% LP + min-norm & 272.163$^{+}$ & 0.00723 & 1.000 & 1.000 & 6/15 & 0/15 \\*
22 & 10\% LP & 7.322 & 0.27 & 0.824 & 0.222 & 0/15 & 0/15 \\*
22 & 10\% LP + min-norm & 482.577$^{+}$ & 0.0167 & 1.000 & 1.000 & 12/15 & 0/15 \\*
22 & ELC CPU & 145.965 & 1.58e-14 & 1.000 & 1.000 & 0/15 & 0/15 \\*
22 & ELC GPU & 46.525 & 4.98e-14 & 1.000 & 1.000 & 0/15 & 0/15 \\
\midrule
24 & LP & 317.264 & 0 & 0.898 & 0.543 & 0/15 & 0/15 \\*
24 & LP + min-norm & 330.111 & 1e-09 & 1.000 & 1.000 & 0/15 & 0/15 \\*
24 & 50\% LP & 180.193 & 0.133 & 0.856 & 0.355 & 0/15 & 0/15 \\*
24 & 50\% LP + min-norm & 364.165$^{+}$ & 1e-09 & 1.000 & 1.000 & 6/15 & 0/15 \\*
24 & 10\% LP & 25.628 & 0.333 & 0.793 & 0.063 & 0/15 & 0/15 \\*
24 & 10\% LP + min-norm & 600.000$^{+}$ & -- & -- & -- & 15/15 & 0/15 \\*
24 & ELC CPU & 600.000$^{+}$ & -- & -- & -- & 15/15 & 0/15 \\*
24 & ELC GPU & 214.134 & 0 & 1.000 & 1.000 & 0/15 & 0/15 \\
\end{longtable}

%% file: tables/strength_homogeneous.tex
\begin{tabular}{llrrrrr}
\toprule
Constraints & Method & AP$^+$ & AP$^-$ & Distractor & Useful & Harm \\
 & & & & negative rate & order accuracy & order accuracy \\
\midrule
Full & Shapley & \textbf{1.000} & 0.750 & 0.500 & \textbf{1.000} & \textemdash \\
 & LP & \textbf{1.000} & 0.698 & 0.417 & 0.909 & \textemdash \\
 & LP + min-norm & \textbf{1.000} & \textbf{1.000} & \textbf{1.000} & 0.667 & \textemdash \\
 & ELC & \textbf{1.000} & \textbf{1.000} & \textbf{1.000} & \textbf{1.000} & \textemdash \\
\midrule
50\% sample & Shapley & 0.884 & 0.562 & 0.506 & 0.566 & \textemdash \\
 & LP & 0.995 & 0.719 & 0.482 & \textbf{0.931} & \textemdash \\
 & LP + min-norm & 0.996 & \textbf{0.979} & \textbf{0.993} & 0.676 & \textemdash \\
 & ELC & \textbf{1.000} & 0.969 & \textbf{0.993} & 0.903 & \textemdash \\
\midrule
10\% sample & Shapley & 0.861 & 0.489 & 0.389 & 0.549 & \textemdash \\
 & LP & 0.980 & 0.752 & 0.625 & \textbf{0.807} & \textemdash \\
 & LP + min-norm & \textbf{0.985} & \textbf{0.842} & \textbf{0.794} & 0.775 & \textemdash \\
 & ELC & 0.984 & 0.832 & 0.786 & 0.772 & \textemdash \\
\bottomrule
\end{tabular}

%% file: tables/strength_balanced.tex
\begin{tabular}{llrrrrr}
\toprule
Constraints & Method & AP$^+$ & AP$^-$ & Distractor & Useful & Harm \\
 & & & & negative rate & order accuracy & order accuracy \\
\midrule
Full & Shapley & \textbf{1.000} & 0.673 & 0.208 & \textbf{1.000} & \textbf{1.000} \\
 & LP & \textbf{1.000} & 0.761 & 0.256 & 0.897 & 0.373 \\
 & LP + min-norm & \textbf{1.000} & \textbf{1.000} & \textbf{1.000} & 0.667 & 0.000 \\
 & ELC & \textbf{1.000} & \textbf{1.000} & \textbf{1.000} & \textbf{1.000} & 0.633 \\
\midrule
50\% sample & Shapley & 0.845 & 0.654 & 0.429 & 0.572 & 0.593 \\
 & LP & 0.996 & 0.823 & 0.492 & \textbf{0.860} & 0.334 \\
 & LP + min-norm & 0.995 & \textbf{0.974} & \textbf{0.977} & 0.667 & 0.092 \\
 & ELC & \textbf{1.000} & 0.953 & 0.940 & 0.845 & \textbf{0.620} \\
\midrule
10\% sample & Shapley & 0.787 & 0.609 & 0.267 & 0.564 & 0.550 \\
 & LP & 0.959 & 0.861 & 0.712 & \textbf{0.623} & 0.515 \\
 & LP + min-norm & 0.961 & \textbf{0.891} & \textbf{0.814} & 0.571 & 0.473 \\
 & ELC & \textbf{0.968} & 0.878 & 0.787 & 0.605 & \textbf{0.616} \\
\bottomrule
\end{tabular}

%% file: tables/strength_balanced_homogeneous.tex
\begin{tabular}{llrrrrr}
\toprule
Constraints & Method & AP$^+$ & AP$^-$ & Distractor & Useful & Harm \\
 & & & & negative rate & order accuracy & order accuracy \\
\midrule
Full & Shapley & \textbf{1.000} & 0.764 & 0.167 & \textbf{1.000} & \textemdash \\
 & LP & \textbf{1.000} & 0.757 & 0.208 & 0.896 & \textemdash \\
 & LP + min-norm & \textbf{1.000} & \textbf{1.000} & \textbf{1.000} & 0.667 & \textemdash \\
 & ELC & \textbf{1.000} & \textbf{1.000} & \textbf{1.000} & \textbf{1.000} & \textemdash \\
\midrule
50\% sample & Shapley & 0.804 & 0.603 & 0.370 & 0.574 & \textemdash \\
 & LP & 0.995 & 0.794 & 0.418 & \textbf{0.870} & \textemdash \\
 & LP + min-norm & 0.991 & \textbf{0.976} & \textbf{0.987} & 0.676 & \textemdash \\
 & ELC & \textbf{1.000} & 0.952 & 0.948 & 0.844 & \textemdash \\
\midrule
10\% sample & Shapley & 0.746 & 0.569 & 0.198 & 0.560 & \textemdash \\
 & LP & 0.958 & 0.783 & 0.559 & \textbf{0.726} & \textemdash \\
 & LP + min-norm & 0.961 & \textbf{0.821} & \textbf{0.687} & 0.657 & \textemdash \\
 & ELC & \textbf{0.965} & 0.808 & 0.646 & 0.696 & \textemdash \\
\bottomrule
\end{tabular}

%% file: tables/uci_removal.tex
\begin{tabular}{lrr}
\toprule
Dataset & Lowest first & Highest first \\
\midrule
Medical & $-0.032\pm0.056$ & $+0.000\pm0.068$ \\
Chemical & $+0.299\pm0.214$ & $-0.449\pm0.217$ \\
House & $+0.029\pm0.031$ & $+0.244\pm0.242$ \\
\bottomrule
\end{tabular}

%% file: tables/enron_summary.tex
\begin{table*}[t]
\centering
\caption{Enron label-correction and mislabel-identification results (mean \(\mathbin{\pm}\) SE over five repetitions). Bold and underline mark the highest and second-highest mean in each performance column, respectively.}
\label{tab:enron-summary}
\scriptsize
\resizebox{\textwidth}{!}{%
\begin{tabular}{lrrrr}
\toprule
Method & Corr. norm. AUC & Corr. acc. at 35\% & Mislabel AP & Flip recall \\
\midrule
Random & \(0.7200\mathbin{\pm}0.011\) & \(0.723\mathbin{\pm}0.013\) & \(0.211\mathbin{\pm}0.005\) & \(0.204\mathbin{\pm}0.018\) \\
Shapley & \(0.7306\mathbin{\pm}0.011\) & \(\mathbf{0.743}\mathbin{\pm}0.012\) & \(\mathbf{0.456}\mathbin{\pm}0.023\) & \(\mathbf{0.433}\mathbin{\pm}0.036\) \\
LP + min-norm & \(\underline{0.7310}\mathbin{\pm}0.013\) & \(0.741\mathbin{\pm}0.012\) & \(0.268\mathbin{\pm}0.015\) & \(0.273\mathbin{\pm}0.016\) \\
ELC & \(\mathbf{0.7336}\mathbin{\pm}0.013\) & \(\underline{0.742}\mathbin{\pm}0.012\) & \(\underline{0.274}\mathbin{\pm}0.015\) & \(\underline{0.286}\mathbin{\pm}0.014\) \\
\bottomrule
\end{tabular}
}
\end{table*}

%% file: tables/marker_summary.tex
\begin{tabular}{lrrrr}
\toprule
Method & Images & Mean $x(D)$ & Negative rate & Negative on harmful \\
\midrule
Shapley & 100 & $+0.294$ & 31.0\% & 37.8\% \\
LP & 100 & $-0.259$ & 67.0\% & 77.0\% \\
LP + min-norm & 97 & $-0.250$ & 67.0\% & 76.4\% \\
ELC & 100 & $-0.257$ & 67.0\% & 77.0\% \\
\bottomrule
\end{tabular}

%% file: refs.bib
@incollection{shapley1953,
  author    = {Shapley, L. S.},
  title     = {A Value for {$n$}-Person Games},
  booktitle = {Contributions to the Theory of Games II},
  publisher = {Princeton University Press},
  year      = {1953},
}

@article{maschler1979,
  author  = {Maschler, M. and Peleg, B. and Shapley, L. S.},
  title   = {Geometric Properties of the Kernel, Nucleolus, and Related Solution Concepts},
  journal = {Mathematics of Operations Research},
  volume  = {4},
  number  = {4},
  pages   = {303--338},
  year    = {1979},
  doi       = {10.1287/moor.4.4.303},
}

@article{schmeidler1969,
  author  = {Schmeidler, D.},
  title   = {The Nucleolus of a Characteristic Function Game},
  journal = {SIAM Journal on Applied Mathematics},
  volume  = {17},
  number  = {6},
  pages   = {1163--1170},
  year    = {1969},
  doi       = {10.1137/0117107},
}

@inproceedings{yan2021,
  author    = {Yan, T. and Procaccia, A. D.},
  title     = {If You Like {Shapley} Then You'll Love the Core},
  booktitle = {Proceedings of the 35th AAAI Conference on Artificial Intelligence (AAAI)},
  pages     = {5751--5759},
  year      = {2021},
  volume    = {35},
  number    = {6},
  doi       = {10.1609/aaai.v35i6.16721},
}

@inproceedings{gemp2024,
  title     = {Approximating the Core via Iterative Coalition Sampling},
  booktitle = {Proceedings of the 23rd International Conference on Autonomous Agents and Multiagent Systems (AAMAS)},
  pages     = {669--678},
  year      = {2024},
  author    = {Gemp, Ian and Lanctot, Marc and Marris, Luke and Mao, Yiran and Du{\'e}{\~n}ez-Guzm{\'a}n, Edgar and Perrin, Sarah and Gyorgy, Andras and Elie, Romuald and Piliouras, Georgios and Kaisers, Michael and Hennes, Daniel and Bullard, Kalesha and Larson, Kate and Bachrach, Yoram},
  url       = {https://www.ifaamas.org/Proceedings/aamas2024/pdfs/p669.pdf},
}

@article{dubey1981,
  author  = {Dubey, P. and Neyman, A. and Weber, R. J.},
  title   = {Value Theory without Efficiency},
  journal = {Mathematics of Operations Research},
  volume  = {6},
  number  = {1},
  pages   = {122--128},
  year    = {1981},
  doi       = {10.1287/moor.6.1.122},
}

@inproceedings{nagarajan2026,
  author    = {Nagarajan, S. G. and Yamauchi, T. and Kera, H.},
  title     = {Seeing through Distractions: Stable Attribution via the Core},
  booktitle = {NExT-Game Workshop at ICML},
  year      = {2026},
}

@book{nesterov2018,
  author    = {Nesterov, Y.},
  title     = {Lectures on Convex Optimization},
  edition   = {2},
  publisher = {Springer},
  year      = {2018},
  series    = {Springer Optimization and Its Applications},
  volume    = {137},
  doi       = {10.1007/978-3-319-91578-4},
}

@article{alvarez2000,
  author  = {Alvarez, F.},
  title   = {Absolute Minimizer in Convex Programming by Exponential Penalty},
  journal = {Journal of Convex Analysis},
  volume  = {7},
  number  = {1},
  pages   = {197--202},
  year    = {2000},
}

@inproceedings{lundberg2017,
  author    = {Lundberg, Scott M. and Lee, Su-In},
  title     = {A Unified Approach to Interpreting Model Predictions},
  booktitle = {Advances in Neural Information Processing Systems (NeurIPS)},
  year      = {2017},
  volume    = {30},
  url       = {https://proceedings.neurips.cc/paper/2017/hash/8a20a8621978632d76c43dfd28b67767-Abstract.html},
}

@article{strumbelj2010,
  author  = {{\v{S}}trumbelj, Erik and Kononenko, Igor},
  title   = {An Efficient Explanation of Individual Classifications Using Game Theory},
  journal = {Journal of Machine Learning Research},
  volume  = {11},
  pages   = {1--18},
  year    = {2010},
  number    = {1},
  url       = {https://www.jmlr.org/papers/v11/strumbelj10a.html},
}

@inproceedings{sundararajan2017,
  author    = {Sundararajan, Mukund and Taly, Ankur and Yan, Qiqi},
  title     = {Axiomatic Attribution for Deep Networks},
  booktitle = {Proceedings of the 34th International Conference on Machine Learning (ICML)},
  pages     = {3319--3328},
  year      = {2017},
  volume    = {70},
  series    = {Proceedings of Machine Learning Research},
  publisher = {PMLR},
  url       = {https://proceedings.mlr.press/v70/sundararajan17a.html},
}

@inproceedings{sundararajan2020,
  author    = {Sundararajan, Mukund and Najmi, Amir},
  title     = {The Many {Shapley} Values for Model Explanation},
  booktitle = {Proceedings of the 37th International Conference on Machine Learning (ICML)},
  pages     = {9269--9278},
  year      = {2020},
  volume    = {119},
  series    = {Proceedings of Machine Learning Research},
  publisher = {PMLR},
  url       = {https://proceedings.mlr.press/v119/sundararajan20b.html},
}

@inproceedings{ghorbani2019,
  author    = {Ghorbani, Amirata and Zou, James},
  title     = {Data {Shapley}: Equitable Valuation of Data for Machine Learning},
  booktitle = {Proceedings of the 36th International Conference on Machine Learning (ICML)},
  year      = {2019},
  volume    = {97},
  series    = {Proceedings of Machine Learning Research},
  publisher = {PMLR},
  url       = {https://proceedings.mlr.press/v97/ghorbani19c.html},
  pages     = {2242--2251},
}

@article{aas2021,
  author  = {Aas, Kjersti and Jullum, Martin and L{\o}land, Anders},
  title   = {Explaining Individual Predictions When Features Are Dependent: More Accurate Approximations to {Shapley} Values},
  journal = {Artificial Intelligence},
  volume  = {298},
  pages   = {103502},
  year    = {2021},
}

@inproceedings{covert2020,
  author    = {Covert, Ian and Lundberg, Scott M. and Lee, Su-In},
  title     = {Understanding Global Feature Contributions with Additive Importance Measures},
  booktitle = {Advances in Neural Information Processing Systems (NeurIPS)},
  year      = {2020},
  volume    = {33},
  url       = {https://proceedings.neurips.cc/paper_files/paper/2020/hash/c7bf0b7c1a86d5eb3be2c722cf2cf746-Abstract.html},
}

@inproceedings{jethani2022,
  author    = {Jethani, Neil and Sudarshan, Mukund and Covert, Ian and Lee, Su-In and Ranganath, Rajesh},
  title     = {{FastSHAP}: Real-Time {Shapley} Value Estimation},
  booktitle = {International Conference on Learning Representations (ICLR)},
  year      = {2022},
}

@article{mitchell2022,
  author  = {Mitchell, Rory and Cooper, Joshua and Frank, Eibe and Holmes, Geoffrey},
  title   = {Sampling Permutations for {Shapley} Value Estimation},
  journal = {Journal of Machine Learning Research},
  volume  = {23},
  number  = {43},
  pages   = {1--46},
  year    = {2022},
}

@inproceedings{sumiyasu2024,
  author    = {Sumiyasu, Kosuke and Kawamoto, Kazuhiko and Kera, Hiroshi},
  title     = {Identifying Important Group of Pixels Using Interactions},
  booktitle = {Proceedings of the IEEE/CVF Conference on Computer Vision and Pattern Recognition (CVPR)},
  pages     = {6017--6026},
  year      = {2024},
}

@misc{sumiyasu2022,
  author        = {Sumiyasu, Kosuke and Kawamoto, Kazuhiko and Kera, Hiroshi},
  title         = {Game-Theoretic Understanding of Misclassification},
  year          = {2022},
  howpublished  = {arXiv preprint arXiv:2210.03349},
  eprint        = {2210.03349},
  archivePrefix = {arXiv},
  primaryClass  = {cs.CV},
}

@inproceedings{chang2025,
  author    = {Chang, Wonjoon and Lee, Myeongjin and Choi, Jaesik},
  title     = {Rethinking {Shapley} Value for Negative Interactions in Non-Convex Games},
  booktitle = {International Conference on Learning Representations (ICLR)},
  year      = {2025},
}

@inproceedings{yamauchi2026,
  author    = {Yamauchi, Toshinori and Kera, Hiroshi and Kawamoto, Kazuhiko},
  title     = {Explaining Object Detectors via Collective Contribution of Pixels},
  booktitle = {Proceedings of the IEEE/CVF Conference on Computer Vision and Pattern Recognition (CVPR)},
  pages     = {17046--17056},
  year      = {2026},
}

@inproceedings{cuturi2013sinkhorn,
author = {Cuturi, Marco},
year = {2013},
publisher = {Curran Associates Inc.},
  title     = {{Sinkhorn} Distances: Lightspeed Computation of Optimal Transport},
  booktitle = {Advances in Neural Information Processing Systems},
  volume    = {26},
  pages     = {2292--2300},
  url       = {https://papers.neurips.cc/paper_files/paper/2013/hash/af21d0c97db2e27e13572cbf59eb343d-Abstract.html},
}

@inproceedings{abernethy2024,
  author = {Abernethy, Jacob A. and Schapire, Robert E. and Syed, Umar},
  title = {Lexicographic Optimization: Algorithms and Stability},
  booktitle = {Proceedings of the 27th International Conference on Artificial Intelligence and Statistics},
  series = {Proceedings of Machine Learning Research},
  volume = {238},
  pages = {2503--2511},
  year = {2024},
  url = {https://proceedings.mlr.press/v238/abernethy24a.html},
}

@article{castro2009polynomial,
  author={Castro, Javier and G{\'o}mez, Daniel and Tejada, Juan},
  volume={36},
  number={5},
  pages={1726--1730},
  year={2009},
  publisher={Elsevier},
  title     = {Polynomial calculation of the {Shapley} value based on sampling},
  journal   = {Computers \& Operations Research},
  doi       = {10.1016/j.cor.2008.04.004},
}

@article{grabisch1999an,
  author  = {Grabisch, Michel and Roubens, Marc},
  title   = {An Axiomatic Approach to the Concept of Interaction among Players in Cooperative Games},
  journal = {International Journal of Game Theory},
  volume  = {28},
  pages   = {547--565},
  year    = {1999},
  doi       = {10.1007/s001820050125},
}

@inproceedings{kwon2022beta,
  author={Kwon, Yongchan and Zou, James},
  title     = {Beta {Shapley}: a Unified and Noise-reduced Data Valuation Framework for Machine Learning},
  booktitle = {Proceedings of The 25th International Conference on Artificial Intelligence and Statistics},
  year      = {2022},
  volume    = {151},
  series    = {Proceedings of Machine Learning Research},
  pages     = {8780--8802},
  publisher = {PMLR},
  url       = {https://proceedings.mlr.press/v151/kwon22a.html},
}

@inproceedings{wang2023data,
  pages={6388--6421},
  year={2023},
  author    = {Wang, Jiachen T. and Jia, Ruoxi},
  title     = {Data {Banzhaf}: A Robust Data Valuation Framework for Machine Learning},
  booktitle = {Proceedings of The 26th International Conference on Artificial Intelligence and Statistics},
  volume    = {206},
  series    = {Proceedings of Machine Learning Research},
  publisher = {PMLR},
  url       = {https://proceedings.mlr.press/v206/wang23e.html},
}

@article{pred-diff,
    journal = {Artificial Intelligence},
    volume = {312},
    pages = {103774},
    year = {2022},
    author = {Stefan Blücher and Johanna Vielhaben and Nils Strodthoff},
  title     = {{PredDiff}: Explanations and interactions from conditional expectations},
  doi       = {10.1016/j.artint.2022.103774},
}

@inproceedings{li2026nonlinear,
  title     = {Nonlinear Axiomatic Attribution for Cooperative Games},
  author    = {Li, Weida and Liu, Zhuanghua and Yu, Yaoliang and Low, Bryan Kian Hsiang},
  booktitle = {Proceedings of the 42nd Conference on Uncertainty in Artificial Intelligence},
  pages     = {3581--3603},
  year      = {2026},
  volume    = {337},
  series    = {Proceedings of Machine Learning Research},
  publisher = {PMLR},
  url       = {https://proceedings.mlr.press/v337/li26c.html},
}

@inproceedings{harris2026core,
  title     = {In-context credit assignment via the core},
  author    = {Harris, Keegan and Prasad, Siddharth and Trockman, Asher},
  booktitle = {{NExT-Game Workshop at ICML}},
  year      = {2026},
  url       = {https://arxiv.org/abs/2605.06920},
}

@book{grotschellovaszschrijver1988,
  author    = {Gr{\"o}tschel, Martin and Lov{\'a}sz, L{\'a}szl{\'o} and Schrijver, Alexander},
  title     = {Geometric Algorithms and Combinatorial Optimization},
  series    = {Algorithms and Combinatorics},
  volume    = {2},
  publisher = {Springer-Verlag},
  address   = {Berlin, Heidelberg},
  year      = {1988},
  doi       = {10.1007/978-3-642-97881-4}
}

@article{deng1994complexity,
  author  = {Deng, Xiaotie and Papadimitriou, Christos H.},
  title   = {On the Complexity of Cooperative Solution Concepts},
  journal = {Mathematics of Operations Research},
  volume  = {19},
  number  = {2},
  pages   = {257--266},
  year    = {1994},
  doi     = {10.1287/moor.19.2.257}
}
